\documentclass{article}


\usepackage[numbers]{natbib}
\usepackage[preprint]{neurips_2025}




\usepackage[utf8]{inputenc} 
\usepackage[T1]{fontenc}    
\usepackage{hyperref}       
\usepackage{url}            
\usepackage{booktabs}       
\usepackage{amsfonts}       
\usepackage{nicefrac}       
\usepackage{microtype}      
\usepackage{xcolor}         

\usepackage{amsmath}
\usepackage{amssymb}
\usepackage{amsthm}
\usepackage{bm}
\usepackage{graphicx}
\usepackage{subcaption}
\usepackage{algorithm}
\usepackage{algpseudocode}
\usepackage{booktabs} 
\usepackage{multirow}
\usepackage{wrapfig}
\usepackage{makecell}
\usepackage{bbding}
\usepackage{enumitem}

\def\Vbar{\perp\!\!\!\perp}
\def\NVbar{\not \! \perp \!\!\! \perp}

\newtheorem{assumption}{Assumption}
\newtheorem{theorem}{Theorem}
\newtheorem{proposition}{Proposition}
\newtheorem{definition}{Definition}
\newtheorem{lemma}{Lemma}

\def\bR{{\mathbb R}}

\def\bE{{\mathbb E}}

\def\cA{{\mathcal A}}
\def\cX{{\mathcal X}}
\def\cU{{\mathcal U}}
\def\cS{{\mathcal S}}
\def\cY{{\mathcal Y}}

\def\cZ{{\mathcal Z}}
\def\cE{{\mathcal E}}
\def\cM{{\mathcal M}}
\def\cN{{\mathcal N}}

\def\b{\mathbf{b}}
\def\w{\mathbf{w}}
\def\x{\mathbf{x}}
\def\u{\mathbf{u}}
\def\s{\mathbf{s}}
\def\z{\mathbf{z}}
\def\r{\mathbf{r}}
\def\X{\mathbf{X}}
\def\U{\mathbf{U}}
\def\S{\mathbf{S}}
\def\C{\mathbf{C}}
\def\Z{\mathbf{Z}}
\def\R{\mathbf{R}}
\def\W{\mathbf{W}}
\def\b{\mathbf{b}}
\def\V{\mathbf{V}}

\title{Estimating Long-term Heterogeneous Dose-response Curve: Generalization Bound Leveraging Optimal Transport Weights}

%


\author{%
Zeqin Yang$^{1,4}$\thanks{Equal contributions}\quad 
Weilin Chen$^{1*}$\quad 
Ruichu Cai$^{1,2}$\thanks{Corresponding authors.}\quad 
Yuguang Yan$^1$\quad 
Zhifeng Hao$^3$ \\
\textbf{Zhipeng Yu}$^4$ \quad 
\textbf{Zhichao Zou}$^4$ \quad 
\textbf{Jixing Xu}$^4$ \quad 
\textbf{Peng Zhen}$^4$ \quad 
\textbf{Jiecheng Guo}$^4$\\
$^1$ School of Computer Science, Guangdong University of Technology \\
$^2$ Pazhou Laboratory (Huangpu) \\
$^3$ College of Science, Shantou University \\
$^4$ Didi Chuxing\\
\texttt{\{youngzeqin,chenweilin.chn,cairuichu\}@gmail.com}\\
\texttt{ygyan@gdut.edu.cn},\quad
\texttt{haozhifeng@stu.edu.cn}\\
\texttt{\{yuzhipeng,zouzhichao,stevenxujixing,zhenpeng,jasonguo\}@didiglobal.com}
}

\begin{document}

\maketitle

\begin{abstract}
  Long-term treatment effect estimation is a significant but challenging problem in many applications.
  Existing methods rely on ideal assumptions, such as no unobserved confounders or binary treatment, to estimate long-term average treatment effects.
  However, 
  in numerous real-world applications, 
  these assumptions could be violated, and average treatment effects are insufficient for personalized decision-making.
  In this paper,
  we address a more general problem of estimating long-term Heterogeneous Dose-Response Curve (HDRC) while accounting for unobserved confounders and continuous treatment.
  Specifically, to remove the unobserved confounders in the long-term observational data,
  we introduce an optimal transport weighting framework 
  to align the long-term observational data to an auxiliary short-term experimental data.
  Furthermore,
  to accurately predict the heterogeneous effects of continuous treatment,
  we establish a generalization bound on counterfactual prediction error 
  by leveraging the reweighted distribution induced by optimal transport.
  Finally, we develop a long-term HDRC estimator building upon the above theoretical foundations.
  Extensive experiments on synthetic and semi-synthetic datasets demonstrate the effectiveness of our approach.
\end{abstract}

\section{Introduction}
Long-term  treatment effect estimation is practically significant in various domains \cite{hohnhold2015focusing, fleming1994surrogate, chetty2011does}.
While Randomized Controlled Trials (RCTs) are the gold standard for estimating causal effects, collecting long-term outcomes through RCTs is often infeasible due to high costs, e.g., it may take years or even decades to collect mortality outcomes in clinical trials.
As a result, only short-term experimental data are typically available.
In contrast, long-term observational data are often more accessible and cost-effective, and thus are commonly taken as a complement to short-term experimental data.

Under such data combination setting,
there has been growing interest in leveraging short-term outcomes to assist in inferring long-term effects.
With all observed confounders, 
to estimate long-term average effects,
\cite{kallus2020role} studies the efficiency gains of incorporating the short-term outcomes from experimental data with only limited long-term outcomes under binary treatments, while \cite{zeng2024continuous} extends this to continuous treatments.
As a counterpart, allowing for unobserved confounders, \cite{athey2020combining, ghassami2022combining, hu2023identification, imbens2022long} propose different assumptions to use short-term outcomes from experimental data to remove confounding in long-term observational data.
Nevertheless, they still focus on long-term average effects of binary treatment.

Therefore, previous works lack practical applicability, 
since they have largely focused on estimating long-term average effects under ideal assumptions, e.g.,
no unobserved confounders in observational data or binary treatment setting.
In practice,
to develop personalized policies, decision-makers usually require long-term individual effects estimated with unobserved confounders and continuous treatment,
where the average treatment effects are insufficiently informative and the above assumptions are easily violated.
Consider a vocational training example where a company wants to evaluate the effect of training hours (\textit{continuous treatment}) on one-year employment income (\textit{long-term outcome}) of different participants (\textit{heterogeneity}). 
To support this, two types of data can be collected:
short-term experimental data that only collect multiple post-training skill assessment scores (\textit{short-term outcomes}),
and long-term historical observational data that contain both short- and long-term outcomes but suffer from unobserved confounders, such as participants’ learning aptitude.
In such a scenario, how to estimate the individual treatment effect of training hours on the long-term income of a specific participant, i.e., long-term Heterogeneous Dose-Response Curve (HDRC), still remains challenging.

Motivated by the example, this paper focuses on estimating long-term HDRC with unobserved confounders and continuous treatment via data combination,
which presents two key challenges.
First, the long-term outcome only exists in the observational data, suffering from unobserved confounders, and is missing in the experimental data, causing the identification problem. 
Second, the spaces of short- and long-term counterfactual outcomes are infinite since the treatment is continuous, causing a large counterfactual prediction error in estimating the long-term HDRC with only the actual outcome.

To address the first challenge of identifiability, 
we theoretically demonstrate that 
unobserved confounders can be eliminated by aligning the conditional distributions of short-term outcomes between observational and experimental data.
To achieve such alignment, under the Optimal Transport (OT) reweighting framework, we establish a mini-batch and joint distribution-based upper bounds on the discrepancy of conditional distributions, 
which is computationally efficient and can be easily embedded into deep learning.
Further, to address the second challenge of lacking counterfactual outcomes, 
we derive a generalization bound of counterfactual regression based on the reweighted distribution induced by optimal transport, 
showing the counterfactual prediction error of the long-term HDRC can be bounded by the observed weighted factual error with a weighted IPM term.
Based on the above theoretical results, 
we propose our model called \textbf{L}ong-term h\textbf{E}terogeneous dose-response curve estim\textbf{A}tor with \textbf{R}eweighting and represe\textbf{N}tation learning (LEARN),
capable of reducing the observed and unobserved confounding bias.
Our contributions are summarized as follows:

\begin{itemize}[leftmargin=8pt]
    \item To the best of our knowledge, this is the first work to address the problem of estimating long-term HDRC with unobserved confounders and continuous treatment via data combination.
    \item We theoretically show that the unobserved confounders can be removed by aligning the conditional distributions between observational and experimental data, 
    and propose a practical and computationally OT weighting method to achieve such alignment.
    \item Based on the OT-induced reweighted distribution, we derive the generalization bound on the counterfactual prediction error in long-term HDRC estimation, which inspires our model. 
    \item We conduct extensive experiments on multiple synthetic and semi-synthetic datasets to demonstrate the effectiveness of our proposed method.
\end{itemize}

\section{Related Work} 
\textbf{Long-term Causal Inference.}
A common strategy for long-term causal inference is to leverage short-term outcomes via data combination.
Under the surrogacy assumption that short-term outcomes fully mediate the effect of treatment,
\cite{athey2019surrogate} combines multiple short-term outcomes to robustly estimate the long-term average effects,
\cite{cheng2021long} models the relationship between the short- and long-term outcomes via deep learning,
\cite{singh2022generalized} proposes kernel estimator for more complex average causal estimands and avoids linearity and separability assumptions.
However,
the surrogacy assumption is often too strong and may be violated in practice.
With all observed confounders,
\cite{kallus2020role} investigates the role of short-term outcomes for long-term causal inference under binary treatment, 
while \cite{zeng2024continuous} serves as a counterpart for continuous treatment.
\cite{cai2024long} consider settings with partially unobserved surrogates and employ identifiable variational autoencoder for recovery. 
\cite{traninferring} address dynamic treatments through reinforcement learning,
and \cite{yang2024learning, wu2024policy} develop policy learning method for balancing short- and long-term rewards.
Further,
to address the unobserved confounders,
\cite{athey2020combining} introduces a relaxed latent unconfoundedness assumption to achieve  identification, 
which is further leveraged by \cite{chen2023semiparametric} to develop a doubly robust estimator.
Alternative identifiability assumptions are proposed by 
\cite{ghassami2022combining, hu2023identification, obradovic2024identification}.
Interestingly, \cite{park2024bracketing} studies the bracketing relationship between identifiability assumptions in \cite{athey2020combining} and \cite{ghassami2022combining}.
Based on proximal methods\cite{tchetgen2024introduction},
\cite{imbens2022long} exploits the sequential structure of short-term outcomes.
\cite{van2023estimating} performs instrumental variable regression to estimate long-term effects.
Despite these advances across various settings,
to the best of our knowledge,
we are the first to focus on the long-term HDRC estimation problem with unobserved confounders and continuous treatment.

For other related work about dose-response curve and optimal transport, please refer to Appendix \ref{sec: related work}.

\section{Preliminary}
\subsection{Optimal Transport Weighting}
Optimal Transport (OT) \cite{villani2021topics} is widely used to quantify distribution discrepancy as the minimum cost of transporting one distribution to another.
Among the rich theory of OT, we focus on the \textit{Kantorovich Problem} \cite{kantorovich2006translocation} for discrete distribution in this paper.
Recently,
thanks to its many advantages, 
such as geometric sensitivity and stability,
OT has been explored for learning weights to align distribution \cite{dunipace2021optimal, guo2022learning, yan2024exploiting}.
Specifically,
suppose we want to learn weights $\w_{\alpha}$ for samples in distribution $\alpha$, 
such that the reweighted distribution is aligned with another distribution $\beta$.
To achieve this goal,
we view the to-be-learned weights $\w_{\alpha}$ as the probability measure of distribution $\alpha$,
and represent the distribution $\beta$ with a uniform probability measure $\bm{\mu}$.
By doing so, 
the learning of $\w_{\alpha}$ can be formulated as the process of minimizing the OT distance between distributions $\alpha$ and $\beta$:
\begin{align}
\label{eq: ot weight}
    &\min_{\w_{\alpha}} OT(\alpha, \beta)=\min_{\w_{\alpha}}\min_{\Gamma\in \Pi(\alpha, \beta)} \langle \Gamma, C \rangle, \nonumber\\
    &s.t.~~
    \Pi(\alpha, \beta)=\{\Gamma\in \bR^{n\times m}|\Gamma \mathbf{1}_m=\w_{\alpha}, \Gamma^T \mathbf{1}_n=\bm{\mu}, \Gamma_{ij}\in \left[0, 1\right]\},
\end{align}
where $\langle \cdot, \cdot \rangle$ is the Frobenius inner product,
$n$ and $m$ are the sample numbers of $\alpha$ and $\beta$,
the transport cost matrix $C$ is the unit-wise distance between $\alpha$ and $\beta$, 
the transport probability matrix $\Gamma$ satisfying $\Pi(\alpha, \beta)$ is learned by minimizing $OT(\alpha, \beta)$,
which reflects how to transport samples from $\alpha$ to $\beta$.

\subsection{Long-term Heterogeneous Dose-response Curve}

Let 
$A\in \cA$ be the 1-dimensional continuous treatment, 
$\X\in \cX$ be the observed confounders,
$\U\in \cU$ be the unobserved confounders,
$\S(a)\in \cS$ be the potential short-term outcomes measured at timesteps $1,2,...,t_0$,
and $Y(a)\in \cY$ be the potential long-term outcome measured at timestep $T$.
Let lowercase letters (e.g., $a, \x, \u, \s(a),y(a)$) denote the value of the random variables. 
Following potential outcome framework \cite{rubin1974estimating},
the observed short- and long-term outcomes $\S, Y$ are the potential outcomes $\S(a), Y(a)$  corresponding to the actually received treatment $a$.

Following \cite{athey2020combining,  chen2023semiparametric, ghassami2022combining},
we consider a data combination setting where $G= \{o, e\}$ distinguishes between two types of data:
a large observational data $O=\{a_i, \x_i, \s_i, y_i, G_i=o\}_{i=1}^{n_o}$
and
a small experimental data $E=\{a_i, \x_i, \s_i, G_i=e\}_{i=1}^{n_e}$ ($n_e\ll n_o$).
That is, 
the treatment $A$, covariates $\X$ and short-term outcomes $\S$ are available in both datasets, 
while long-term outcome $Y$ is only available in the observational data.
Our goal is to estimate the long-term Heterogeneous Dose-Response Curve (HDRC), 
which is defined as follows:
\begin{align}
    \mu(\x, a) = \bE[Y(a)|\X=\x].
\end{align}
Notably,
the long-term HDRC cannot be estimated solely from the experimental data due to the missingness of long-term outcome $Y$,
and
also cannot be estimated solely from the observational data due to the possible unobserved confounders $\U$. 
To overcome these challenges,
we learn long-term HDRC via data combination, requiring the following assumptions:

\begin{assumption}[Consistency] 
\label{assum: consist}
  If $A=a$, then  $Y=Y(a),~ \S=\S(a)$.
\end{assumption}

\begin{assumption}[Positivity]
\label{assum: positi}
    $0<P(A=a|\X=\x)<1,~ 0<P(G=o|A=a, \X=\x)<1$, $\forall a,\x$.
\end{assumption}

\begin{assumption} [Weak internal validity of observational data] \label{assum: internal validity of obs}
    $A\Vbar \{Y(a),\S(a)\}|\X, \U, G=o$ and $A\NVbar \{Y(a),\S(a)\}|\X, G=o$.
\end{assumption}

\begin{assumption} [Internal validity of experimental data] \label{assum: internal validity of exp}
    $A\Vbar \{Y(a),\S(a)\}|\X, G=e$.
\end{assumption}

\begin{assumption} [Unconfounded selection] \label{assum: external validity of exp}
    $G\Vbar \{Y(a),\S(a)\}|\X$.
\end{assumption}

\begin{assumption} [Latent unconfoundedness] \label{assum: LU}
    $A\Vbar Y(a)|\X, \S(a), G=o$.
\end{assumption}

The assumptions above are mild and widely used in existing literature, e.g.,  \cite{athey2020combining,ghassami2022combining}.
Assumption \ref{assum: consist} and \ref{assum: positi} are standard assumptions in causal inference.
Assumption \ref{assum: internal validity of obs} allows unobserved confounders to exist in observational data,
which is much weaker than the classic unconfoundedness assumption. 
However, this assumption also poses challenges for identification because we cannot directly control all confounders $\X, \U$ in practice.
Different from Assumption \ref{assum: internal validity of obs},
Assumption \ref{assum: internal validity of exp} does not allow for the presence of unobserved confounders in experimental data.
Assumption \ref{assum: external validity of exp} 
allows us to generalize the distribution of potential outcomes between different types of data $G$,
and this assumption is reasonable because the difference between experimental and observational data is usually the treatment mechanism, while the potential outcome distributions remain unchanged and stable.
Finally,
Assumption \ref{assum: LU} is the key assumption in our paper,
which is first proposed by \cite{athey2020combining} to achieve long-term average effect identification under the binary-treatment setting.
Different from \cite{athey2020combining}, we reuse this assumption to identify long-term HDRC by a novel reweighting schema.
This assumption means that unobserved confounders in the observational data is mediated through the short-term outcomes,
effectively blocking the path $\U \rightarrow Y$.
In our vocational training example,
this assumption means that the short-term skill assessment scores fully reflect the participants’ learning aptitude.
The causal graphs that satisfy the above assumptions are shown in Fig. \ref{fig: graphs}.

\begin{figure*}[!t]
    \centering
    \includegraphics[width=.43\linewidth]{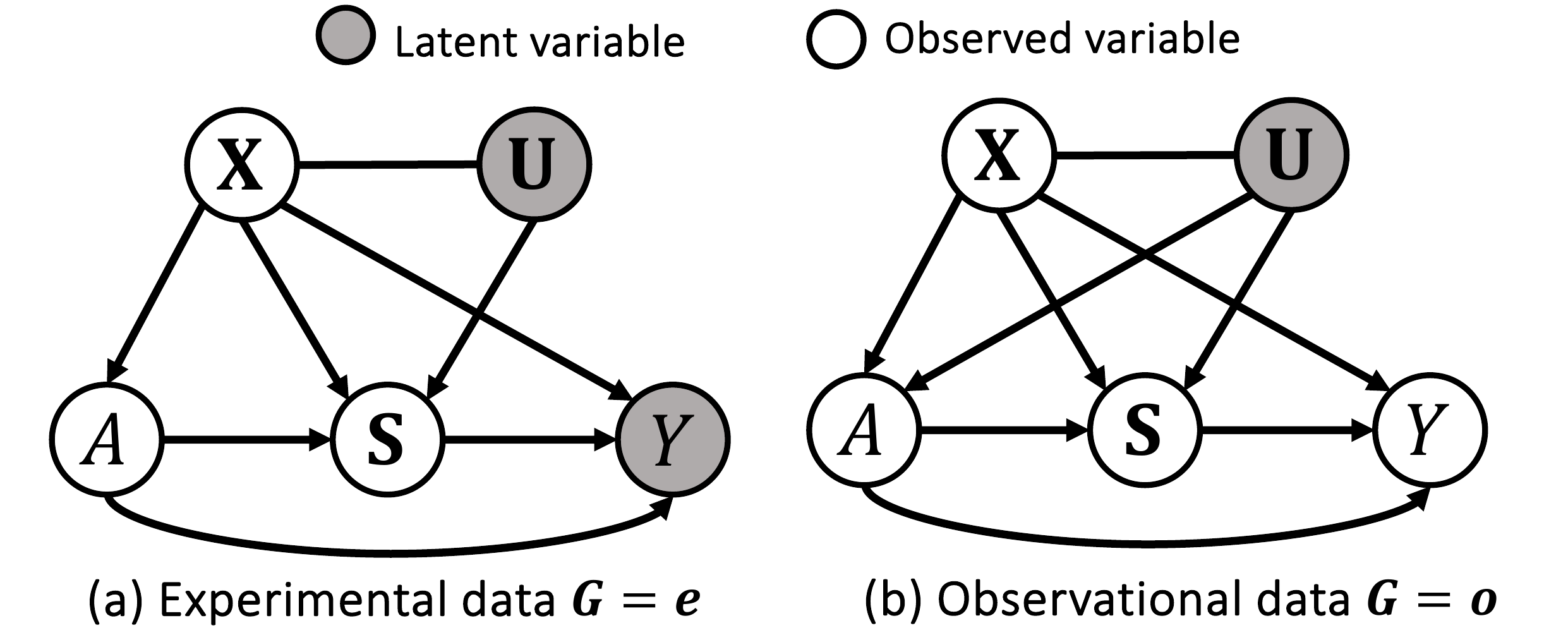}
    \caption{Causal graphs of experimental and observational data.}
    \label{fig: graphs}
\vspace{-0.35cm}
\end{figure*}

\section{Proposed Method}
To estimate long-term HDRC unbiasedly, 
we address unobserved confounders via reweighting and observed confounders via representation learning, as detailed in Sections \ref{sec: reweight} and \ref{sec: repre}, respectively.

\subsection{Optimal Transport Weights for Unobserved Confounders via Data Combination}
\label{sec: reweight}

We now present our method for handling the unobserved confounders in observational data.
In section \ref{sec: identifiable},
we propose a novel reweighting schema dealing with the unobserved confounders in observational data, making long-term HDRC identifiable. 
In section \ref{sec: learning},
we provide a practical and computationally OT method to learn such weights via data combination.

\subsubsection{Identifiable Long-term HDRC via Reweighting}
\label{sec: identifiable}

Since $Y$ is only observed in observational data, 
identifying long-term HDRC requires the unconfoundedness assumption of observational data, i.e., $A \Vbar \{\S(a), Y(a)\}|\X, G=o$,
which does not hold due to Assumption \ref{assum: internal validity of obs}.
Fortunately, $A \Vbar \{\S(a), Y(a)\}|\X, G=e$ holds, 
though $Y$ is missing in the experimental data.
It raises a question: 
can we remove the influence of unobserved confounders in the observational data to achieve identification, 
by utilizing experimental data that only includes short-term outcomes? 
We find that, 
although $\U$ is unobserved, its influence can be captured by the (in)dependence between $\S$ and $G$ conditional on $\X$ and $A$, 
as illustrated in the Proposition \ref{pro: reweight unconf}.
For simplify, we denote $P^\w(\cdot)=\w P(\cdot)$ as the reweighted distribution built on weights $\w$.

\begin{proposition} \label{pro: reweight unconf}
    Under Assumptions \ref{assum: consist}, \ref{assum: positi}, \ref{assum: internal validity of obs}, \ref{assum: internal validity of exp}, \ref{assum: external validity of exp}, and \ref{assum: LU}, 
    given a set of weights $\w=\{\w_o, ~\bm{\mu}\}$ consisting of the learnable weights $\w_o$ for observational units and uniform weights $\bm{\mu}$ for experimental units,
    which makes $P^\w (\S,G|\X, A)=P^\w (\S|\X, A)P^\w (G|\X, A)$, i.e., $\S \Vbar G|\X,A$, 
    then $\mathbb E_{P^\w}[\S(a)|\X,A=a,G=o]= \mathbb E_{P^\w}[\S(a)|\X,G=o]$ holds.
\end{proposition}

The proof is given in Appendix \ref{proof: reweight unconf}. Proposition \ref{pro: reweight unconf} is intuitive. Since unconfoundedness holds in $G=e$, 
then $P(\S(a)|\X,G=e)=P(\S|\X,A=a,G=e)$, but it does not hold when $G=o$.
If we can learn weight $\w$ aligning short-term outcomes between observational data and experimental data, 
i.e., making $P(\S|\X, A, G=e)=P(\S|\X, A, G=o)$, 
the unconfoundedness will also hold in $G=o$ based on Assumption \ref{assum: external validity of exp}. 
Based on Proposition \ref{pro: reweight unconf}, we can further show the long-term HDRC is identifiable as follows.

\begin{theorem}  \label{theorem: iden}
    Suppose assumptions in Proposition \ref{pro: reweight unconf} hold, 
    we have the unconfoundedness $\mathbb E_{P^\w}[Y(a)|\X,A=a,G=o]=\mathbb E_{P^\w}[Y(a)|\X,G=o]$,
    then the long-term HDRC can be identified.
\end{theorem} 

The proof is given in Appendix \ref{proof: iden}. 
Theorem \ref{theorem: iden} is built on Proposition \ref{pro: reweight unconf}, indicating that if we could learn suitable weights making $\S \Vbar G|\X,A$,
the influence of unobserved confounders will be removed,
and then long-term HDRC can be identified.
Therefore,
our goal now is to learn a set of weights $\w_o$ for observational samples such that the conditional independence $\S \Vbar G|\X,A$ holds based on the reweighted distribution,
i.e.,
$P^{\w_o}(\S|\X, A, G=o)=P^{\bm{\mu}}(\S|\X, A, G=e)$.

\subsubsection{Learning Optimal Transport Weights}
\label{sec: learning}

Based on the above theoretical analyses,
we need learn weights $\w_o$ for observational samples to minimize the distances between $P(\S|\X, A, G=o)$ and $P(\S|\X, A, G=e)$.
However, most existing works only study how to measure the distances between joint distributions, instead of conditional distributions. 
Surprisingly, we find that under the optimal transport framework, the distance between conditional distributions can be bounded by that of joint distributions.
As a result, we propose a mini-batch joint distribution-based OT method that not only helps align the conditional distributions but also can be easily embedded into the training of deep learning with theoretical guarantees.

To begin with, following Eq. (\ref{eq: ot weight}),
our goal is estimating weights $\w_o$ for observational samples via minimizing the \textit{conditional OT distance},
i.e.,
$OT(P(\S|\X, A, G=o), P(\S|\X, A, G=e))$.
The intuitive approach to solve this problem is to minimize every sub-problem $OT(P(\S|\X=\x, A=a, G=o), P(\S|\X=\x, A=a, G=e))$ for every realization $\X=\x, A=a$ as follows:
    \begin{align}
\label{eq: con OT}
     OT^{con}_{\x, a}
    = \sum_{\x, a}OT(P(\S|\X=\x, A=a, G=o), \;P(\S|\X=\x, A=a, G=e)).
\end{align}
\textbf{The First Challenge.}
However,
solving the above conditional OT problem is infeasible.
The reason is that the conditional set $(\X, A)$ are multi-dimensional continuous variables, 
leading to the existence of infinitely sub-problems in Eq. (\ref{eq: con OT}).
To overcome this challenge,
we establish a connection between the OT distance regarding conditional distributions and that regarding joint distributions:
\begin{theorem}
\label{theorem: ot1}
    Assuming the cost matrix of the joint distribution $P(\S, \X, A)$ is separable, 
    i.e.,
    $C(\s, \x, a; \tilde{\s}, \tilde{\x}, \tilde{a}) = C(\s; \tilde{\s}) + C(\x; \tilde{\x}) + C(a; \tilde{a})$,
    we have:
    \begin{align}
        OT^{con}_{\x, a}\leq OT(P(\S, \X, A|G=o), P(\S, \X, A|G=e)).
    \end{align}
    In addition,
    with assumption $P(\S=\s, \X=\x, A=a|G=g)>0$,
    we have $OT(P(\S, \X, A|G=o), P(\S, \X, A|G=e))\to 0$ as $n_o\to \infty$,
    leading to $OT^{con}_{\x, a}\to 0$.
\end{theorem}

The proof can be found in Appendix \ref{proof: ot1}.
The separable assumption is widely used to simplify the optimal transport problem \cite{courty2017joint, kim2022conditional}, 
and many distances such as the Manhattan distance and the squared Euclidean distance are separable.
Theorem \ref{theorem: ot1} shows that the conditional OT distance can be bounded by the OT distance between the corresponding joint distributions,
which motivates us to learn weights $\w_o$ by minimizing the upper bound $OT(P(\S, \X, A|G=o), P(\S, \X, A|G=e))$ instead of the infeasible $OT^{con}_{\x, a}$.
And when $n_o\to \infty$,
$OT^{con}_{\x, a}\to 0$ means that the probability measure $\w_o$ of the observational distribution $P^{\w_o}(\S|\X, A, G=o)$  will converge to that of experimental distribution $P(\S|\X, A, G=e)$,
ensuring HDRC identifiable based on Proposition \ref{pro: reweight unconf}.

\textbf{The Second Challenge.}
While we solve the problem of conditional OT, 
another challenge about intensive computation emerges.
Specifically,
solving $OT(P(\S, \X, A|G=o), P(\S, \X, A|G=e))$ will cause a high computational cost since the whole optimized transport matrix $\Gamma\in \bR^{n_o\times n_e}$ at each iteration can be large with the large size of whole observational data $n_o$.
Additionally, optimizing the whole matrix at each iteration is also not suitable for the mini-batch training manner of deep learning.
To tackle this challenge,
inspired by \cite{yang2023prototypical},
we propose a mini-batch OT based on randomly sub-sampled observational data and provide its theoretical analysis as follows:
\begin{definition}
    Consider two empirical distributions $\alpha$ and $\beta$ with $n$ and $m$ units, 
    we assume the batch size $b$ of the first distribution $\alpha$ satisfy $b\mid n$ and let $ k=n/b$ be the number of batches.
    Let $\mathcal{B}_{i}$ be index set of the $i$-th batch  in $\alpha$ and the corresponding empirical distribution is $\alpha_{\mathcal{B}_i}$,
    then the mini-batch OT problem is defined as:
    \begin{small}
            \begin{align}
        m\mbox{-}OT(\alpha, \beta)=\frac{1}{k}\sum_{i=1}^k OT(\alpha_{\mathcal{B}_i}, \beta).
    \end{align}
    \end{small}
\end{definition}
\begin{theorem}
\label{theorem: ot2}
    Let $\gamma_i$ be the optimal transport probability matrix of the $i$-th batch OT problem of $m\mbox{-}OT(P(\S, \X, A|G=o), P(\S, \X, A|G=e))$.
    With extending $ \gamma_i$ to a $n_o\times n_e$ matrix $\Gamma_{i}$ that pads zero entries to the row whose index does not belong to $\mathcal{B}_{i}$,
    we have:
    \begin{small}
        \begin{align}
        \frac{1}{k}\sum_{i=1}^k\Gamma_i\in \Pi(P(\S, \X, A|G=o), P(\S, \X, A|G=e)),
        \end{align}
    \end{small}
    and
    \begin{small}
        \begin{align}
        OT(P(\S, \X, A|G=o), P(\S, \X, A|G=e))
        \leq m\mbox{-}OT(P(\S, \X, A|G=o), P(\S, \X, A|G=e)).
        \end{align}
    \end{small}
\end{theorem}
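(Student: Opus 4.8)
The plan is to prove the two assertions in turn, treating the inequality as a near-immediate consequence of the feasibility claim together with the optimality of each batch coupling. Throughout I identify the full problem with $OT(\alpha,\beta)$, where $\alpha=P(\S,\X,A|G=o)$ and $\beta=P(\S,\X,A|G=e)$, with $n=n_o$ rows, $m=n_e$ columns, and cost matrix $C$. The structural fact I would lean on first is that $b\mid n$ and $k=n/b$ mean the index sets $\mathcal{B}_1,\dots,\mathcal{B}_k$ form a \emph{partition} of $\{1,\dots,n\}$ into disjoint blocks of size $b$, so every source index lies in exactly one batch; this is what makes the zero-padding interact cleanly with averaging.

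First I would establish feasibility, i.e.\ $\bar\Gamma:=\frac{1}{k}\sum_{i=1}^{k}\Gamma_i\in\Pi(\alpha,\beta)$, by checking the three defining constraints of $\Pi$ from Eq.~(\ref{eq: ot weight}). Nonnegativity and the box constraint $\bar\Gamma_{jl}\in[0,1]$ are immediate, since $\bar\Gamma$ is a convex combination of matrices whose entries all lie in $[0,1]$ (the padded zeros are harmless). For the column marginal, each batch coupling satisfies $\gamma_i^{T}\mathbf{1}_b=\bm{\mu}$, and zero-padding preserves column sums, so $\Gamma_i^{T}\mathbf{1}_n=\bm{\mu}$ for every $i$; averaging gives $\bar\Gamma^{T}\mathbf{1}_n=\frac{1}{k}\sum_i\bm{\mu}=\bm{\mu}$. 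The row marginal is the step that needs care: because the $\mathcal{B}_i$ partition the source, the zero-padding of $\Gamma_i$ means row $j$ receives a contribution only from the unique batch containing it; combined with the normalization of each mini-batch measure and the averaging weight $1/k$, the row sums of $\bar\Gamma$ must reproduce exactly the source marginal $\w_\alpha$. Verifying this bookkeeping — that the per-block measures rescaled by $1/k$ telescope back to the full marginal — is the one place where the partition structure and the choice of mini-batch normalization must be invoked explicitly, and I expect it to be the main (though elementary) obstacle, especially if $\w_\alpha$ is non-uniform.

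With feasibility in hand, the inequality is essentially linearity of the Frobenius inner product plus the definition of $OT$ as a minimum. Since $OT(\alpha,\beta)=\min_{\Gamma\in\Pi(\alpha,\beta)}\langle\Gamma,C\rangle$ and $\bar\Gamma$ is feasible, I obtain $OT(\alpha,\beta)\le\langle\bar\Gamma,C\rangle=\frac{1}{k}\sum_i\langle\Gamma_i,C\rangle$. Because $\Gamma_i$ agrees with $\gamma_i$ on the rows indexed by $\mathcal{B}_i$ and vanishes elsewhere, $\langle\Gamma_i,C\rangle=\langle\gamma_i,C_{\mathcal{B}_i}\rangle$, where $C_{\mathcal{B}_i}$ is the restriction of $C$ to the rows in $\mathcal{B}_i$; this is exactly the cost of the $i$-th batch problem, which follows from the cost being defined entrywise on point pairs (the same consistency underlying the separable cost of Theorem \ref{theorem: ot1}). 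Finally, since $\gamma_i$ is by assumption the optimal coupling for $OT(\alpha_{\mathcal{B}_i},\beta)$, we have $\langle\gamma_i,C_{\mathcal{B}_i}\rangle=OT(\alpha_{\mathcal{B}_i},\beta)$, and summing yields $OT(\alpha,\beta)\le\frac{1}{k}\sum_i OT(\alpha_{\mathcal{B}_i},\beta)=m\mbox{-}OT(\alpha,\beta)$, as claimed. I anticipate no difficulty in this second part beyond confirming that the restricted cost used in each batch problem is genuinely the corresponding submatrix of the global $C$.
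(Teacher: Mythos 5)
Your proposal is correct and follows essentially the same route as the paper's proof: verify that the averaged zero-padded coupling $\frac{1}{k}\sum_i\Gamma_i$ is feasible (column marginals are preserved trivially by padding and averaging; row marginals recover $\alpha$ because the disjoint batches partition the source and the per-batch measures rescaled by $1/k$ telescope back to the full marginal, which the paper encodes via the indicator vectors $\eta_i$), and then bound $OT(\alpha,\beta)$ by the cost of this feasible coupling, which by linearity and the optimality of each $\gamma_i$ equals $m\mbox{-}OT(\alpha,\beta)$. No substantive difference from the paper's argument.
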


The proof is in Appendix \ref{proof: ot2}. 
Theorem \ref{theorem: ot2} implies that the OT distance of the joint distribution $P(\S, \X, A)$ between observational and experimental data is upper-bounded by its corresponding $m\mbox{-}OT$ problem.
Solving the $m\mbox{-}OT$ problem not only significantly reduces the high optimization cost,
but also allows us to embed the OT distance into the deep learning framework,
where only batch observational units and full experimental units are considered at each training iteration.

\textbf{Conclusion.} 
We have overcome two challenges about the conditional OT distance and high computational cost via Theorem \ref{theorem: ot1} and Theorem \ref{theorem: ot2} respectively.
Combining these two theorems,
we could learn the weights $\w_o$ by minimizing $m\mbox{-}OT(P(\S, \X, A|G=o), P(\S, \X, A|G=e))$,
which is the upper bound of the original $OT^{con}_{\x, a}$.
As a result,
our estimated weights $\w_o$ could achieve Proposition \ref{pro: reweight unconf} approximately,
which leads to the identification of HDRC according to Theorem \ref{theorem: iden}.

\subsection{Generalization Bound on Long-term HDCR} 
\label{sec: repre}
Although the OT-induced weights $\w_o$ could remove the unobserved confounders and makes HDRC identifiable,
the counterfactual prediction error about long-term HDRC is still large.
The reason is that the regression can only be trained with the factual outcomes
while the counterfactual ones are infinite due to the continuous treatment, 
and the confounding bias exists since $\X \not\Vbar A$.
To address it,
we now derive the generalization bound about the counterfactual prediction error based on the OT-based reweighted distribution,
where the unobserved confounding bias has been removed.

We first define some notations for discussion.
Let $\phi:\cX\rightarrow \cZ$ be a representation function with inverse $\psi$, 
where $\cZ$ is the representation space. 
Let $g:\cZ\times \cA\rightarrow \cS$ and $h:\cZ\times \cA \times \cS\rightarrow \cY$ be the hypotheses that predict the potential short- and long-term outcomes.
Since we aims to predict potential long-term outcome $Y(a)$ for all possible treatments $a$ on sample $\x$,
which requires that we could already predict the potential short-term outcomes $\S(a)$ accurately.
As a result,
we define the combined loss for learning these two hypotheses $g, h$ as $\ell_{\phi, g, h}(\x, a)=\ell^s_{\phi, g}(\x, a)+\ell^y_{\phi, h}(\x, a)$,
where 
$\ell_{\phi, g}^{s}(\x,a) =\int _{\cS} L( \s(a) ,g( \phi ( \x) ,a)) P(\s( a) |\x) d\s( a)$
and 
$\ell_{\phi, h}^{y}(\x,a) =\int _{\cS \times \cY} L( y(a) ,h( \phi ( \x) ,a,\s( a))) P( y( a) ,\s( a) |\x) d\s( a) dy( a)$
($L(\cdot, \cdot)$ is the error function).
Therefore,
the ideal prediction error over all treatment $a$ for $\x$ can be defined as $\cE(\x)=\bE_{a\sim P(A)}[\ell_{\phi, g, h}(\x, a)]$,
and the target of counterfactual prediction is minimizing $\cE_{cf}=\bE_{\x\sim P(\X)}[\cE(\x)]$.

However, 
$\cE_{cf}$ is practically incomputable,
because the factual data we access lack the counterfactual outcomes.
As a result,
we can only obtain the prediction error on the reweighted factual distribution as $\cE_f^{\w_o}=\bE_{\x, a\sim P^{\w_o}(\X, A)}[\ell_{\phi, g, h}(\x, a)]$.
Next,
following \cite{shalit2017estimating, bellot2022generalization},
we derive a generalization bound to bridge a connection between $\cE_{cf}$ and $\cE_f^{\w_o}$ in the long-term HRDC estimation scenario:
\begin{theorem}
\label{theorem: bound}
    Assuming a family $\cM$ of function $m: \cZ\times \cA\rightarrow \mathbb{R}$, 
    and there exists a constant $B_{\phi}>0$ such that $\frac{1}{B_{\Phi}}\cdot \ell_{\phi, g, h}(\x, a)\in \cM$,
    then we have:
    \begin{small}
            \begin{align}
        \cE_{cf} \leq \cE_f^{\w_o} + IPM_\cM(P(\Z)P(A),~ \w_o\cdot P(\Z,A)),
    \end{align}
    \end{small}
    where $IPM_{\cM}(p,q)=\sup_{m\in \cM}\left|\int_\mathcal{X}g(x)(p(x)-q(x))dx\right|$ is integral probability metric for a chosen $\cM$, 
    $P(\Z)$ and $P(\Z, A)$ are distributions induced by the map $\phi$ from $P(\X)$ and $ P(\X, A)$.
\end{theorem}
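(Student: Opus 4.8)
The plan is to adapt the representation-based counterfactual generalization argument of \cite{shalit2017estimating, bellot2022generalization} to the reweighted, continuous-treatment setting, the only new ingredients being the sample weights $\w_o$ and the average over treatments. First I would unfold the two risks and write their gap as a single integral against a signed measure. The counterfactual risk $\cE_{cf}=\bE_{\x\sim P(\X)}\bE_{a\sim P(A)}[\ell_{\phi, g, h}(\x, a)]$ scores the loss under the \emph{product} measure $P(\X)\otimes P(A)$ (each unit is evaluated over all treatments drawn independently of $\x$), whereas the reweighted factual risk $\cE_f^{\w_o}=\bE_{\x, a\sim P^{\w_o}(\X, A)}[\ell_{\phi, g, h}(\x, a)]$ scores it under the reweighted \emph{joint} $\w_o\cdot P(\X,A)$. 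Hence
\begin{small}
\begin{align}
\cE_{cf}-\cE_f^{\w_o}=\int_{\cX\times\cA}\ell_{\phi, g, h}(\x, a)\big[P(\X)P(A)-\w_o\cdot P(\X,A)\big]\,d\x\,da. \nonumber
\end{align}
\end{small}
A routine check that $\w_o$ is a probability measure (guaranteed by the OT construction of Section~\ref{sec: reweight}) ensures both terms have unit mass, so the right-hand side is the integral of a fixed test function against a difference of two probability measures.

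Next I would transport this identity into the representation space $\cZ$ using the invertibility of $\phi$. This is the key step: although $\ell_{\phi, g, h}(\x, a)$ depends on $\x$ both through the hypotheses $g(\phi(\x),a)$ and $h(\phi(\x),a,\cdot)$ and through the true conditionals $P(\s(a)|\x)$ and $P(y(a),\s(a)|\x)$, the existence of the inverse $\psi$ lets me rewrite it as a bona fide function of $(\z,a)$, namely $(\z,a)\mapsto\ell_{\phi, g, h}(\psi(\z),a)$. Pushing both measures forward through $\phi$ (the Jacobian of $\psi$ cancels identically, since the same map is applied to both terms), the product $P(\X)P(A)$ becomes $P(\Z)P(A)$ and the reweighted joint $\w_o\cdot P(\X,A)$ becomes $\w_o\cdot P(\Z,A)$, yielding
\begin{small}
\begin{align}
\cE_{cf}-\cE_f^{\w_o}=\int_{\cZ\times\cA}\ell_{\phi, g, h}(\psi(\z),a)\big[P(\Z)P(A)-\w_o\cdot P(\Z,A)\big]\,d\z\,da. \nonumber
\end{align}
\end{small}

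Finally I would invoke the IPM. By hypothesis $\tfrac{1}{B_{\phi}}\ell_{\phi, g, h}\in\cM$, so the rescaled loss is an admissible test function in the supremum defining $IPM_\cM$. Taking absolute values and bounding the integral by the supremum over $\cM$ gives
\begin{small}
\begin{align}
\cE_{cf}-\cE_f^{\w_o}
&\le\Big|\int_{\cZ\times\cA}\ell_{\phi, g, h}(\psi(\z),a)\big[P(\Z)P(A)-\w_o\cdot P(\Z,A)\big]\,d\z\,da\Big| \nonumber\\
&\le B_{\phi}\cdot IPM_\cM\big(P(\Z)P(A),\,\w_o\cdot P(\Z,A)\big), \nonumber
\end{align}
\end{small}
which is the claimed inequality after rearranging, with the constant $B_{\phi}$ absorbed into the metric (equivalently, set to one by rescaling $\cM$).

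The main obstacle is the change-of-variables step: I must argue that the entire loss, including the data-generating conditionals that genuinely depend on $\x$, can legitimately be regarded as a function on $\cZ\times\cA$, which hinges on $\phi$ being invertible rather than merely measurable, and I must verify that applying the identical pushforward to both measures leaves the test function unchanged so that no residual density-ratio terms survive. The remaining steps—identifying the two sampling measures and applying the definition of the IPM—are routine once the representation-space reformulation is in place.
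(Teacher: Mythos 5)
Your proposal is correct and follows essentially the same route as the paper's own proof: decompose $\cE_{cf}-\cE_f^{\w_o}$ as an integral of the loss against the difference between the product measure and the reweighted joint, push both measures forward to $\cZ$ via $\phi$ using the invertibility of the representation, and bound the result by the supremum defining the IPM. You are in fact slightly more careful than the paper in tracking the constant $B_{\phi}$, which the paper's proof (and theorem statement) silently absorbs into the function class.
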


The proof is given in Appendix \ref{proof: bound}.
Theorem \ref{theorem: bound} states that the counterfactual error $\cE_{cf}$ is upper-bounded by the factual error $\cE_f^{\w_o}$ plus a reweighted IPM term that measures the dependence between treatment $A$ and observed confounders $\X$.
The theorem inspires a classical balanced representation method based on deep learning \cite{shalit2017estimating},
which handles the observed confounding bias by learning the balanced representation $\Z=\phi(\X)$ that is independent of treatment $A$, i.e., $\Z\Vbar A$.

\begin{figure*}[!t]
\vspace{-0.3cm}
    \centering
    \includegraphics[width=.75\linewidth]{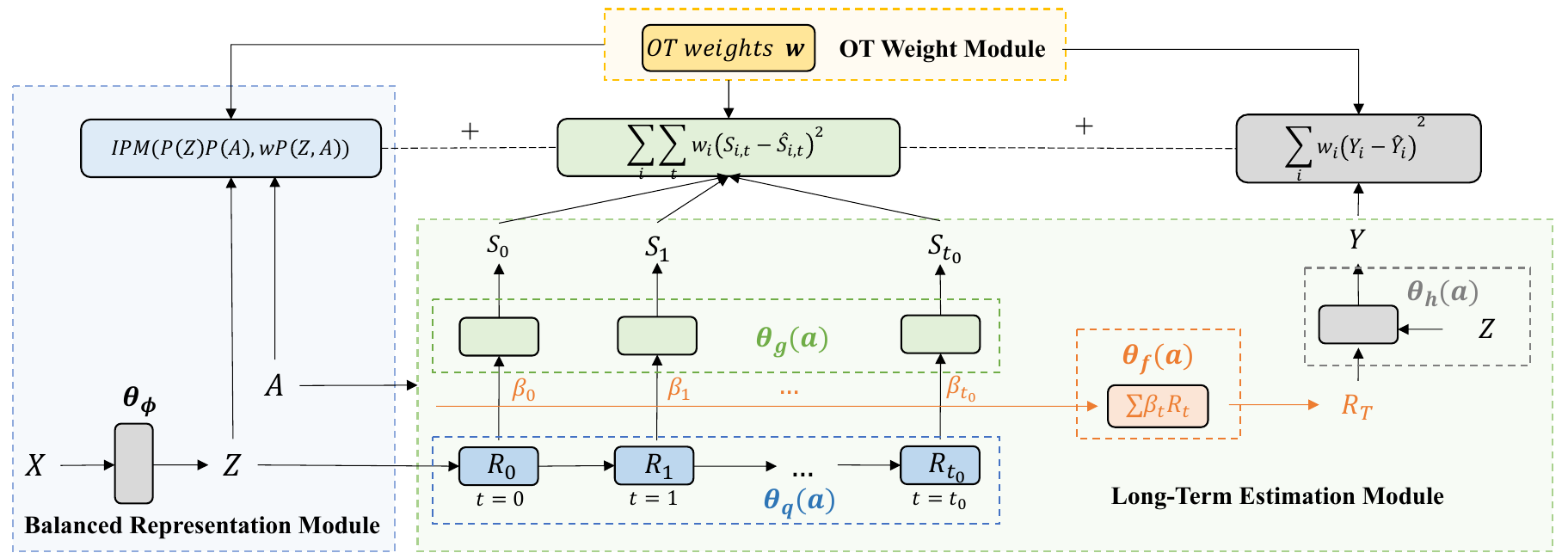}
    \caption{Model architecture of the proposed LEARN.}
    \label{fig: model}
\vspace{-0.35cm}
\end{figure*}

\section{Model Architecture}
\label{sec: model}
We have presented the theoretical results,
which handle the unobserved and observed confounders by reweighting and representation learning, 
respectively.
In this section, 
we summarize the model architecture inspired by the theoretical results. 
As shown in Fig. \ref{fig: model}, 
LEARN can be divided into three modules: 
OT weight module,
balanced representation module
and long-term estimation module.

\textbf{OT Weight Module.} 
Based on Theorems \ref{theorem: ot1} and \ref{theorem: ot2},
we design this module to estimate weights $\w_o$ for removing the unobserved confounding bias.
To embed the $m\mbox{-}OT(\alpha, \beta)$ problem into deep learning,
following \cite{yang2023prototypical},
we approximate it by solving $OT(\alpha_{\mathcal{B}}, \beta)$ over each mini-batch $\mathcal{B}$.
As a result,
we learn the weights $\w_{o, \mathcal{B}}=\{w_{o, 1}, ..., w_{o, b}\}$ by minimizing $OT(P_{\mathcal{B}}(\S, \X, A|G=o), P(\S, \X, A|G=e))$ between the batch observational data and full experimental data at each iteration.
However,
optimizing the OT problem directly usually induces a sparse solution,
which means only a limited number of observational units are transported,
suffering from low data efficiency
\cite{blondel2018smooth,vincent2022semi}.
Motivated by \cite{yan2024exploiting},
we apply a negative entropy regularization on the marginal distributions $\w_{o, \mathcal{B}}$,
i.e.,
$\Omega(\w_{o, \mathcal{B}})=\sum_{j=1}^{b}w_{o, j}(\log w_{o, j} - 1)$,
to encourage more observational units to be transported,
and also avoid the heavy computation of the linear programming \cite{2013sinkhornCuturi}.
Finally,
we learn weights $\w_{o, \mathcal{B}}$ as follows:
\begin{align}
\label{eq: problem}
    \min_{\w_{o, \mathcal{B}}} \min_{\gamma \in \Pi(\alpha_{\mathcal{B}}, \beta)}
    &\langle \gamma, C \rangle
    + \lambda_e \Omega(\gamma),\nonumber\\
    s.t.~
    \Pi(\alpha_{\mathcal{B}}, \beta)&=\{\gamma\in \bR^{b\times n_e}|\gamma \mathbf{1}_{n_e}=\w_{o, \mathcal{B}},\;\gamma^T \mathbf{1}_b=\bm{\mu}, \;\gamma_{ij}\in \left[0, 1\right]\},
\end{align}
where $\lambda_e$ is the hyperparameter and cost matrix $\C$ is constructed based on the Euclidean distance.

Following \cite{yan2024exploiting},
we develop a projected mirror descent \cite{nemirovskij1983problem,raskutti2015information} to solve the problem (\ref{eq: problem}),
which firstly performs proximal gradient descent with the Bregman divergence \cite{banerjee2005clustering},
and then obtains a feasible solution in the set $\Pi(\alpha_{\mathcal{B}}, \beta)$ by projection.
The details can be found in Appendix \ref{app: optim}.

\textbf{Balanced Representation Module.} 
Based on Theorem \ref{theorem: bound},
we design this module to correct the observed confounding bias. 
We transform the observed confounders $\X$ into balanced representation $\Z$ through MLP $\phi(\cdot)$ with parameter $\theta_{\phi}$ via minimizing $IPM(P(\Z)P(A), \w_{o, \mathcal{B}} P(\Z, A))$.
In practice,
we use the Wasserstein distance \cite{villani2009optimal} as the implementation of the IPM.
To calculate it, 
we simulate samples standing for the product of marginal distributions $P(\Z)P(A)$ by randomly permuting the observed treatment $A$, 
and the original samples are drawn from the joint distribution $P(\Z, A)$.

\textbf{Long-Term Estimation Module.} This module aims to estimate the potential long-term outcome $Y(a)$ based on potential short-term outcomes $\S(a)$ for different units.
We follow LTEE \cite{cheng2021long} to model the relationship between the short- and long-term outcomes.
Specifically,
the balanced representations $\Z$ are firstly fed into the GRU $q(\cdot)$ to obtain the short-term representations $\R_t$ at each timestep $t$,
which is then used to predict the corresponding short-term outcome $\hat{S}_t$ with a shared MLP $g(\cdot)$.
After that,
we leverage the attention mechanism $f(\cdot)$ \cite{bahdanau2014neural} to construct the long-term representation $\R_T$ from $\{\R_1, \R_2, ..., \R_{t_0}\}$,
and then predict the long-term outcome $\hat{Y}_T$ through an MLP $h(\cdot)$.

However, different from the original LTEE which only works with binary treatment,
we employ the varying coefficient structure \cite{nie2021vcnet} to extend this module to a continuous treatment scenario.
In particular, 
we use splines to model the parameters $\theta(\cdot)$ of the above four network structures $q(\cdot), g(\cdot), f(\cdot), h(\cdot)$. 
As a result,
the influence of the treatment information is not lost in high-dimensional representations, which in practice has been shown to lead to better performance.

\textbf{Loss Function.} The training loss at each iteration of the above modules is:
\begin{align}
\label{eq: loss}
    \mathcal{L}_\theta
    = &\frac{1}{b}\sum_{i=1}^{b}w_{o, i}(y_i - \hat{y}_i)^2
    + \frac{\lambda_o}{b} \sum_{i=1}^{b}\sum_{t=1}^{t_0}w_{o, i}(s_{i, t}-\hat{s}_{i, t})^2
    + \frac{1-\lambda_o}{n_e}\sum_{i=1}^{n_e}\sum_{t=1}^{t_0}(s_{i, t}-\hat{s}_{i, t})^2\nonumber\\ 
    &+ \lambda_b  IPM (\{\phi(\x_i),\tilde a_i\}_{i=0}^b,  \{w_{o, i} \phi(\x_i),w_{o, i} a_i\}_{i=0}^b ),
\end{align}
where $\lambda_b$ is the strength of IPM, and $\tilde A_i$ is the samples obtained by randomly permuting original treatment samples $A_i$.
To make full use of the datasets, 
besides the observational data, 
we also train the model based on the prediction error of the short-term outcomes on experimental data,
and use $\lambda_o$ to control its strength.
We provide more implementation details and the pseudocode in Appendix \ref{implement details}.

\section{Experiments} 
In this section, 
we conduct experiments to validate the effectiveness and correctness of our proposed model \textbf{LEARN}.
In particular, we aim to answer the following research questions (RQs):

\textbf{RQ1}: How does LEARN perform in long-term HDRC estimation compared to existing methods?

\textbf{RQ2}: Do the designed modules effectively address both observed and unobserved confounding bias?

\textbf{RQ3}: How does batch size affect the approximation error about $m\mbox{-}OT$ problem in Theorem \ref{theorem: ot2}?

\textbf{RQ4}: How does the size of experimental data used for data combination affect the performance?

\textbf{RQ5}: Does our model stably perform well under different choices of hyperparameters?

\subsection{Experimental Setup}
\label{sec: exp}
\textbf{Dataset Generation.}
Since the true long-term HDRC is unavailable for real-world datasets, 
following prior works \cite{nie2021vcnet, wang2022generalization, kazemi2024adversarially},
we use five synthetic datasets with varying levels of unobserved confounding bias controlled by $\beta_U \in \{1, 1.5, 2, 2.5, 3\}$, 
and two semi-synthetic datasets, 
News \cite{schwab2020learning} and TCGA \cite{weinstein2013cancer}, 
to evaluate our model.
Details of the data generation are provided in Appendix \ref{sec: data gene}.

\textbf{Baselines.}
We first compare our model with several baselines designed for continuous treatment,
including \textbf{DRNet} \cite{schwab2020learning}, 
\textbf{VCNet} \cite{nie2021vcnet},
\textbf{SCIGAN} \cite{bica2020estimating},
\textbf{ADMIT} \cite{wang2022generalization}
and \textbf{ACFR} \cite{kazemi2024adversarially}.
Besides,
since there is a lack of work on estimating long-term HDRC,
we adapt LTEE \cite{cheng2021long} to the continuous treatment setting by incorporating a varying coefficient structure, denoted as \textbf{LTEE(VCNet)}.
Moreover, to evaluate the contribution of each component in our model, we also include two ablated variants:
\textbf{LEARN(None)} removes all modules addressing confounding bias;
\textbf{LEARN(IPM)} includes only the IPM term to correct observed confounding bias, without accounting for the unobserved one.

\textbf{Metrics.} 
we first split the observational dataset with a $6/2/2$ ratio into training, validation and test sets,
and we report the mean integrated square error (MISE) and standard deviations over 5 independent replications on the test set:
$\mbox{MISE}=\frac{1}{n}\sum_{i=1}^n \int_{a_1}^{a_2} (\mu(\x_i, a) - \hat{\mu}(\x_i, a))^2 da$,
where $n$ is the test sample size  and $[a_1, a_2]$ is the sampling interval of treatment values.

\begin{table*}[t]
\caption{Results of HDRC estimation on five synthetic and two semi-synthetic datasets. The MISE is reported as $\mbox{Mean}\pm \mbox{Std}$. The best-performing method is bolded.}
\label{tab: main result}
\centering
\resizebox{.9\textwidth}{!}{
\begin{tabular}{@{}cccccccc@{}}
\toprule
\multirow{2}{*}{Dataset} & \multicolumn{5}{c}{Synthetic } & \multirow{2}{*}{News} & \multirow{2}{*}{TCGA} \\ \cmidrule(lr){2-6}
    &  $\beta_U=1$    &  $\beta_U=1.5$    &   $\beta_U=2.0$   &  $\beta_U=2.5$   & $\beta_U=3$    &  &    \\ \midrule
DRNet &  $26.36\pm7.32$ &  $28.98\pm7.49$ & $33.76\pm8.75$ & $39.35\pm9.44$ & $47.92\pm10.47$   &  $1.975\pm0.278$  & $1.088\pm0.517$                     \\
SCIGAN &  $26.54\pm 6.98 $ &   $30.52\pm 7.51 $ &  $34.88\pm 8.44 $ & $40.45\pm 8.42 $ & $48.13\pm 9.73 $ &   $2.450\pm0.328$ & $1.428\pm1.219$      \\
VCNet & $24.17\pm 7.21 $ & $27.93\pm 7.99 $ & $31.89\pm 8.95 $ & $36.43\pm 9.88 $ & $43.11\pm 11.24$ &  $1.524\pm0.224$ & $0.511\pm0.616$     \\
ADMIT &   $23.01\pm6.95$ &$26.95\pm6.80$ &$30.76\pm7.88$ &$34.14\pm8.62$ &$41.39\pm10.22$ &   $1.416\pm0.173$ & $0.496\pm0.582$ \\
ACRF  &  $22.93\pm7.13$ &$26.81\pm6.57$ &$30.72\pm7.06$ &$34.28\pm7.20$ &$41.01\pm11.01$ &     $1.547\pm0.276$ & $0.534\pm0.572$ \\ \midrule
LTEE(VCNet)  &  $22.63\pm6.81$ &$25.89\pm6.49$ &$29.95\pm8.00$ &$33.84\pm8.37$ &$40.25\pm9.42$ &      $1.355\pm0.167$ & $0.471\pm0.558$ \\  \midrule
LEARN(None)  &  $23.43\pm7.42$ &$26.38\pm7.14$ &$30.28\pm8.25$ &$34.09\pm9.06$ &$40.65\pm10.33$ &     $1.360\pm0.131$ & $0.475\pm0.561$ \\
LEARN(IPM) & $22.61\pm7.18$ &$25.84\pm6.65$ &$29.76\pm8.08$ &$33.47\pm8.44$ &$39.89\pm9.51$ &      $1.351\pm0.134$ & $0.468\pm0.562$ \\
LEARN &  \pmb{$21.98\pm6.81$} & \pmb{$25.22\pm6.47$ } & \pmb{$28.14\pm6.70$} & \pmb{$33.07\pm8.29$} & \pmb{$38.59\pm9.46$}    &          \pmb{$1.316\pm0.168$}       &   \pmb{$0.464\pm0.555$}  \\ \bottomrule
\end{tabular}
}
\vspace{-0.35cm}
\end{table*}

\subsection{Result and Analysis}

\textbf{Overall Performance (RQ1).}
Table \ref{tab: main result} presents the results on synthetic and semi-synthetic datasets, with the following key findings.
Firstly,
LEARN achieves the highest accuracy across all datasets, outperforming baselines only designed for continuous treatments. 
This improvement is attributed to the proposed long-term estimation module using attention mechanism to model the relationship between short- and long-term outcomes,
which also benefit LTEE(VCNet).
Secondly,
LEARN(IPM) outperforms LEARN(None), indicating that minimizing the IPM term helps reduce observed confounding bias by learning balanced representations, thereby lowering MISE.
Finally,
LEARN further surpasses LEARN(IPM), which highlights the importance of addressing unobserved confounding bias and verifies the effectiveness of our OT weight module.

\textbf{Confounding Bias Correction (RQ2).}
Table \ref{tab: check} evaluates the effectiveness of our designed modules in correcting both observed and unobserved confounders on the synthetic dataset ($\beta_U = 1$).
\textbf{For observed confounders}, we assess the impact of the balanced representation module using the Hilbert-Schmidt Independence Criterion (HSIC, \cite{gretton2005measuring}). Results show an 83.45\% reduction in the dependence between $\phi(\X)$ and $A$, indicating significantly mitigated observed confounding bias.
\begin{wraptable}{r}{7.3cm}
\vspace{-0.35cm}
\caption{Results of the model performance in reducing observed and unobserved confounding bias.}
\label{tab: check}
\centering
\resizebox{0.52\textwidth}{!}{
\begin{tabular}{@{}cccc@{}}
\toprule
       & Observed & \multicolumn{2}{c}{Unobserved} \\ \cmidrule(l){2-2}  \cmidrule(l){3-4} 
       &  $\mbox{HSIC}(\X, A)$        &    $\mbox{HSCONIC}(\S(a), A | \X)$            &  $\mbox{HSCONIC}(Y(a), A | \X)$             \\ \midrule
Reduce ratio &   $83.45\%$        &      $71.55\%$          &    $67.71\%$           \\ \bottomrule
\end{tabular}
}
\vspace{-0.2cm}
\end{wraptable}
\textbf{For unobserved confounders},
the effectiveness of the OT weight module is evaluated using the Hilbert–Schmidt Conditional Independence Criterion (HSCONIC, \cite{fukumizu2007kernel, sun2007kernel}) to assess the unconfoundedness assumption in observational data after reweighting,
i.e.,
$A\Vbar \{Y(a),\S(a)\}|\X, G=o$.
We observe that reweighting reduces the conditional dependence between $A$ and $Y(a)$, $\S(a)$ given $\X$ by 71.55\% and 67.71\%, respectively,
supporting the effectiveness of the OT weight module.

\textbf{Sensitivity of Batch Size (RQ3).}
Following \cite{sommerfeld2019optimal}, we analyze the sensitivity of batch size in the $m$-OT problem by computing the relative approximation error:
$\frac{|m\mbox{-}OT_B(\alpha, \beta) -  OT^*(\alpha, \beta)|}{OT^*(\alpha, \beta)}$,
where $OT^*(\alpha, \beta)=\langle \Gamma^*, C \rangle$ is the optimal transport cost optimized using full observational data,
and $m\mbox{-}OT_B(\alpha, \beta)$ is the approximate cost with batch size $B$.
Table \ref{tab: batch size} reports the results on synthetic dataset ($\beta_U=1$),
which shows that smaller batches lead to higher relative approximation error, reflecting a trade-off between computational efficiency and accuracy about $m\mbox{-}OT$ problem.
However, even with the smallest batch size of 512, the relative error 14.51\% is acceptable.
\begin{table}[!h]
\vspace{-0.35cm}
\caption{The relative approximation error of different batch size about $m\mbox{-}OT$ problem in Theorem \ref{theorem: ot2}.}
\label{tab: batch size}
\centering
\resizebox{.5\textwidth}{!}{
\begin{tabular}{@{}ccccc@{}}
\toprule
Batch Size & 512 & 1024 & 2048 & 4096 \\ \midrule
Relative Error  &  $14.51\%$   &  $7.68\%$   &  $3.45\%$   & $1.03\%$     \\ \bottomrule
\end{tabular}
}
\vspace{-0.3cm}
\end{table}

\textbf{Impact of Sample Size on experimental data (RQ4).}
Since LEARN relies on data combination with experimental data, 
we evaluate the impact of the experimental sample size on MISE using the synthetic dataset ($\beta_U = 1$). 
Results in Table \ref{tab: sample size} show that MISE decreases with larger sample sizes, 
as a larger experimental data provides a more reliable distribution for aligning the observational data, 
which allows us to learn more effective weights for eliminating unobserved confounding bias.
\begin{table}[!h]
\vspace{-0.35cm}
\caption{The impact of experimental data's sample size on the MISE of HDRC estimation.}
\label{tab: sample size}
\centering
\resizebox{.75\textwidth}{!}{
\begin{tabular}{@{}cccccc@{}}
\toprule
Number & 100 & 250 & 500 & 1000 & 2000 \\ \midrule
MISE   &  $22.52\pm7.40$   &  $22.18\pm7.19$   &  $21.98\pm6.81$   & $21.63\pm6.83$     & $21.43\pm6.74$     \\ \bottomrule
\end{tabular}
}
\vspace{-0.3cm}
\end{table}

\textbf{Hyperparameter Sensitivity Study (RQ5).} 
We investigate the impact of three hyperparameters $\lambda_b, \lambda_o, \lambda_e$.
The details can be found in Appendix \ref{sec: sensitivity}.

\section{Conclusion}
In this paper, 
we provide a practical solution to estimate the long-term HDRC with unobserved confounders and continuous treatment via data combination,
which is not well-studied in existing work.
Specifically,
to remove unobserved confounders and make long-term HDRC identifiable, 
we first propose a novel weighting schema aligning the conditional distribution of short-term outcomes between observational and experimental groups.
We solve the reweighting problem under an optimal transport framework, 
showing that the conditional distribution discrepancy can be bounded by mini-batch joint distribution discrepancy, 
which is computationally efficient.
Further, to handle the observed confounders,
we derive a generalization bound on the counterfactual outcome regression error
based on the OT-induced reweighted distribution.
Building upon the above theoretical results,
we develop our model LEARN to estimate long-term HDRC accurately.
Extensive experimental results verify the correctness of our theory and the effectiveness of LEARN.

\section{Acknowledgments}

This research was supported in part by National Science and
Technology Major Project (2021ZD0111501), National Science Fund for Excellent Young Scholars (62122022), Natural Science Foundation of China (U24A20233, 62206064,
62206061, 62476163, 62406078), Guangdong Basic and Applied Basic Research Foundation (2023B1515120020), and
CCF-DiDi GAIA Collaborative Research Funds (CCF-DiDi GAIA 202403).

\bibliography{Neurips25/neurips_2025}
\bibliographystyle{abbrvnat}

\newpage
\appendix

\setcounter{proposition}{0}
\setcounter{lemma}{0}
\setcounter{theorem}{0}
\setcounter{equation}{0}
\numberwithin{equation}{section}

\setcounter{figure}{0}

\setcounter{section}{0}

\section{Proof Details}
\label{sec: proof}

\subsection{Proof of Proposition \ref{pro: reweight unconf}}

\label{proof: reweight unconf}

\begin{proposition}
    Under Assumptions \ref{assum: consist}, \ref{assum: positi}, \ref{assum: internal validity of obs}, \ref{assum: internal validity of exp}, \ref{assum: external validity of exp}, and \ref{assum: LU}, 
    given a set of weights $\w=\{\w_o, ~\bm{\mu}\}$ consisting of the learnable weights $\w_o$ for observational units and uniform weights $\bm{\mu}$ for experimental units,
    which makes $P^\w (\S,G|\X, A)=P^\w (\S|\X, A)P^\w (G|\X, A)$, i.e., $\S \Vbar G|\X,A$, 
    then $\mathbb E_{P^\w}[\S(a)|\X,A=a,G=o]= \mathbb E_{P^\w}[\S(a)|\X,G=o]$ holds.
\end{proposition}

\begin{proof}
    Based on Assumption \ref{assum: consist}, we rewrite $P^\w (\S,G|\X,A)=P^\w (\S|\X,A)P^\w (G|X,A)$ as $P^\w (\S(a),G|\X,A=a)=P^\w (\S(a)|\X,A=a)P^\w (G|\X,A=a) $.
    Based on chain rule, we have 
    \begin{equation} \label{app: eq SG1}
        P^\w (\S(a),G|\X,A=a)=P^\w (\S(a)|G,\X,A=a)P^\w (G|\X,A=a),
    \end{equation}

    then we have 
    \begin{equation} \label{app: eq cond a}
        P^\w (\S(a)|\X,A=a)=P^\w (\S(a)|G,\X,A=a).
    \end{equation}

    Note that, based on Eq. \eqref{app: eq cond a}, we have
    \begin{align}
        P^\w (\S(a)|\X,A=a)
        &=P^\w (\S(a)|G,\X,A=a)\nonumber\\
        &=P^\w (\S(a)|G=e,\X,A=a),
    \end{align}
    and thus we have
            \begin{equation}
        P^\w (\S(a),G|\X,A=a)=P^\w (\S(a)|G=e,\X,A=a)P^\w (G|\X,A=a).
    \end{equation}

    Based on Assumption \ref{assum: internal validity of exp}, we have 
    \begin{equation}
        P^\w (\S(a),G|\X,A=a)=P^\w (\S(a)|G=e,\X)P^\w (G|\X,A=a).
    \end{equation}
    Based on Assumption \ref{assum: external validity of exp}, we have
    \begin{equation} \label{app: eq SG2}
        P^\w (\S(a),G|\X,A=a)=P^\w (\S(a)|G,\X)P^\w (G|\X,A=a).
    \end{equation}

    Hence, based on Eq. \eqref{app: eq SG1} and  \eqref{app: eq SG2}, we conclude 
    \begin{equation} \label{app: eq ps_t=ps}
        P^\w (\S(a)|\X, A=a,G)=P^\w (\S(a)|\X, G),
    \end{equation}
    and thus $\mathbb E_{P^\w}[\S(a)|\X, A=a,G=o] = \mathbb E_{P^\w}[\S(a)|\X, G=o]$ hold.

\end{proof}

\subsection{Proof of Theorem \ref{theorem: iden}}
\label{proof: iden}
\begin{theorem}  
    Suppose assumptions in Proposition \ref{pro: reweight unconf} hold, 
    we have the unconfoundedness $\mathbb E_{P^\w}[Y(a)|\X,A=a,G=o]=\mathbb E_{P^\w}[Y(a)|\X,G=o]$,
    then the long-term HDRC can be identified.
\end{theorem} 

\begin{proof}
    Firstly, 
    we show that $\mathbb E_{P^\w}[Y(a)|\X,A=a,G=o]= \mathbb E_{P^\w}[Y(a)|\X,G=o]$ holds based on Proposition \ref{pro: reweight unconf}.
    Specifically, 
    \begin{equation}
        \begin{aligned}
            &\;\;\;\;\;\;P^\w (Y(a)|\X, A=a,G=o)\\
            & = \int P^\w (Y(a), \S(a)|\X, A=a,G=o) d\S(a)\\
            & = \int P^\w (Y(a)|\S(a), \X,A=a,G=o) P^\w (\S(a)|\X, A=a,G=o)  d\S(a)\\
            & = \int P^\w (Y(a)|\S(a), \X, G=o) P^\w (\S(a)|\X, G=o)  d\S(a)\\
            & = \int P^\w (Y(a),\S(a)|\X, G=o)  d\S(a)\\
            & = P^\w (Y(a)|\X, G=o), \\
        \end{aligned}
    \end{equation}

    where the second equality is based on the chain rule, the third equality is based on Assumption \ref{assum: LU} and Eq. \eqref{app: eq ps_t=ps}. 
    Based on the above equality,
    we can directly conclude
    \begin{align}
    \label{eq: eq9}
       \mathbb E_{P^\w}[Y(a)|\X, A=a,G=o]=\mathbb E_{P^\w}[Y(a)|\X, G=o] . 
    \end{align}

    Then HDRC is identified:
    \begin{equation}
        \begin{aligned}
            \mathbb E_{P^\w}[Y(a)|\X=\x] 
            & = \mathbb E_{P^\w}[Y(a)|\X=\x,G=o] \\ 
            & = \mathbb E_{P^\w}[Y(a)|\X=\x, A=a,G=o] \\
            & = \mathbb E_{P^\w}[Y|\X=\x, A=a,G=o],
        \end{aligned}
        \end{equation}
    where the first equality is based on Assumption \ref{assum: external validity of exp}, the second equality is based on the result of Eq. (\ref{eq: eq9}), and the third equality is based on Assumption \ref{assum: consist}.
\end{proof}

\subsection{Proof of Theorem \ref{theorem: ot1}}
\label{proof: ot1}
We firstly define the empirical observational distribution $P(\S, \X, A|G=o)$ as $\alpha_{\w_o}=\sum_{i=1}^{n_o}w_{o,i}\delta(\s_i, \x_i, a_i)$,
where $\w_o$ is the to-be-learned weights,
and we define the empirical experimental distribution $P(\S, \X, A|G=e)$ as $\beta=\sum_{i=1}^{n_e}\frac{1}{n_e}\delta(\s_i, \x_i, a_i)$.
We also have the following lemma,
which states the convergence of the importance sampling weights:
\begin{lemma}
    \label{lemma: 1}
    Under the assumption of $P(\S=\s, \X=\x, A=a|G=g)>0$,
    let the importance sampling weights as $\breve{w}^{\star}_{o,i}=\frac{dP(\s_i, \x_i, a_i|G=e)}{dP(\s_i, \x_i, a_i|G=o)}$ and the corresponding self-normalized weights as $w^{\star}_{o, i}=\frac{\breve{w}^{\star}_{o, i}}{\sum_i \breve{w}^{\star}_{o, i}}$. 
    Then we have $P(\lim_{n_o\rightarrow \infty}\alpha_{w^{\star}_o}=\beta)=1$.
\end{lemma}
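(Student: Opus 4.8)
The plan is to prove Lemma~\ref{lemma: 1} by recognizing the self-normalized importance sampling estimator as a ratio of two empirical averages and applying the strong law of large numbers (SLLN) to numerator and denominator separately. First I would fix an arbitrary bounded continuous test function $f$ and write the expectation of $f$ under the reweighted empirical measure $\alpha_{w^\star_o}$ as $\int f\, d\alpha_{w^\star_o} = \sum_{i=1}^{n_o} w^\star_{o,i} f(\s_i,\x_i,a_i)$. Substituting the self-normalization $w^\star_{o,i}=\breve{w}^\star_{o,i}/\sum_j \breve{w}^\star_{o,j}$ turns this into $\frac{(1/n_o)\sum_i \breve{w}^\star_{o,i} f(\s_i,\x_i,a_i)}{(1/n_o)\sum_i \breve{w}^\star_{o,i}}$, where each $(\s_i,\x_i,a_i)$ is drawn i.i.d.\ from $P(\cdot|G=o)$.

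The key analytic step is to compute the almost-sure limits of numerator and denominator. By the SLLN, the denominator $\frac{1}{n_o}\sum_i \breve{w}^\star_{o,i}$ converges almost surely to $\bE_{o}\bigl[\frac{dP(\cdot|G=e)}{dP(\cdot|G=o)}\bigr]=\int \frac{dP(\cdot|G=e)}{dP(\cdot|G=o)}\,dP(\cdot|G=o)=\int dP(\cdot|G=e)=1$. Likewise the numerator converges almost surely to $\bE_o\bigl[\frac{dP(\cdot|G=e)}{dP(\cdot|G=o)} f\bigr]=\int f\, dP(\cdot|G=e)$, which is exactly $\int f\, d\beta$. Taking the ratio and using that the denominator limit is the nonzero constant $1$, I conclude $\int f\, d\alpha_{w^\star_o}\to \int f\, d\beta$ almost surely for the fixed $f$. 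Since this holds for every bounded continuous $f$, weak convergence $\alpha_{w^\star_o}\to\beta$ holds with probability one, giving $P(\lim_{n_o\to\infty}\alpha_{w^\star_o}=\beta)=1$.

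The role of the positivity assumption is to guarantee that the importance weights are well defined and have finite expectation: positivity ensures $P(G=o|A=a,\X=\x)>0$ (and the analogous support conditions), so the Radon--Nikodym derivative $\frac{dP(\cdot|G=e)}{dP(\cdot|G=o)}$ exists, is finite $P(\cdot|G=o)$-almost surely, and integrates to one against $P(\cdot|G=o)$. This integrability is precisely what the SLLN requires, and it also certifies that the denominator limit is the strictly positive constant $1$ so that the ratio limit is valid.

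The main obstacle I anticipate is the passage from almost-sure convergence for each fixed test function to a single almost-sure event on which weak convergence holds simultaneously for all $f$. A naive union over uncountably many $f$ fails, so I would handle this by invoking a countable convergence-determining class (e.g.\ a countable dense subset of bounded Lipschitz functions, or moments on a compact domain) on which the SLLN gives a single full-measure event, and then extend to all bounded continuous $f$ by a standard density/approximation argument using the almost-sure boundedness of the total masses. A secondary technical point worth stating carefully is that the SLLN applies because $\breve{w}^\star_{o,i}$ and $\breve{w}^\star_{o,i}f(\s_i,\x_i,a_i)$ are i.i.d.\ with finite mean under $P(\cdot|G=o)$; boundedness of $f$ together with integrability of the weight is enough, so I would not need any stronger moment condition.
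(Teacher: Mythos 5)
Your argument is essentially the paper's own: the paper disposes of this lemma in one line by citing the strong law of large numbers for self-normalized importance sampling (Theorem 9.2 of Owen's Monte Carlo notes), and your ratio-of-empirical-averages derivation is precisely the content of that theorem, with the positivity assumption playing the same role of guaranteeing a well-defined, integrable Radon--Nikodym weight. Your extra care about passing from a fixed test function to almost-sure weak convergence via a countable convergence-determining class is a detail the paper glosses over, but it does not change the route.
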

Based on Theorem 9.2 in \cite{owen2013monte},
Lemma \ref{lemma: 1} is easy to establish by viewing $\delta(\s_i, \x_i, a_i)$ as the function of interest $f(\cdot)$ in the importance sampling.

Now we are ready to prove Theorem 2.
\begin{theorem}
    Assuming the cost matrix of the joint distribution $P(\S, \X, A)$ is separable, 
    i.e.,
    $C(\s, \x, a; \tilde{\s}, \tilde{\x}, \tilde{a}) = C(\s; \tilde{\s}) + C(\x; \tilde{\x}) + C(a; \tilde{a})$,
    we have:
    \begin{align*}
        OT^{con}_{\x, a}\leq OT(P(\S, \X, A|G=o), P(\S, \X, A|G=e)),
    \end{align*}
    In addition, with assumption $P(\S=\s, \X=\x, A=a|G=g)>0$,
    we have $OT(P(\S, \X, A|G=o), P(\S, \X, A|G=e))\to 0$ as $n_o\to \infty$,
    leading to $OT^{con}_{\x, a}\to 0$.
\end{theorem}

\begin{proof}
    With the  separable assumption of the cost matrix,
    we abbreviate the cost matrix about the joint distribution $P(\S, \X, A)$ as
    $C_{\s,\x,a}=C_{\s} + C_{\x} + C_a$.
    Then for any $\Gamma _{\s,\x,a, \tilde{\s} ,\tilde{\x} ,\tilde{a}} \in \Pi(P(\S, \X, A|G=o), P(\S, \X, A|G=e))$,
    we have:
    \begin{align*}
    &\;\;\;\;\;\;OT(P(\S, \X, A|G=o), P(\S, \X, A|G=e)) \\
    & = \min_{\Gamma} ~\langle \Gamma _{\s,\x,a, \tilde{\s} ,\tilde{\x} ,\tilde{a}}, ~ C_{\s,\x,a} \rangle  \\
    & = \min_{\Gamma} ~\langle \Gamma _{\s,\x,a, \tilde{\s} ,\tilde{\x} ,\tilde{a}}  , ~ C_{\s} +C_{\x} +C_{a} \rangle \\
     & \geqslant \min_{\Gamma} ~ \langle \Gamma _{\s,\x,a, \tilde{\s} ,\tilde{\x} ,\tilde{a}}  ,~ C_{\s} \rangle \\
     & \geqslant \min_{\Gamma} ~ \langle \Gamma _{\s,\x,a, \tilde{\s} ,\tilde{\x} ,\tilde{a}}  \mathbb{I}( \x=\tilde{\x} ,a=\tilde{a}) ,~C_{\s} \rangle \\
     & = \min_{\Gamma} ~ \sum _{\x,a,\tilde{\x} ,\tilde{a}} \mathbb{I}( \x=\tilde{\x} ,a=\tilde{a})  \langle \Gamma _{\s|x,a, \tilde{\s} |\tilde{\x} ,\tilde{a}}  ,~ C_{\s} \rangle \\
     & = \min_{\Gamma} ~ \sum _{\x,a}  \langle \Gamma _{\s|\x,a, \tilde{\s} |\x,a}  ,~C_{\s} \rangle \\
     & = OT^{con}_{\x, a}.
    \end{align*}
    
    Next,
    we establish the convergence property.
    Recall that the definition of $\hat{\w}_o=\min_{\w_o} OT(P(\S, \X, A|G=o), P(\S, \X, A|G=e)) = \min_{\w_o} OT(\alpha_{\w_o}, \beta)$,
    we have 
    $OT(\alpha_{\hat{\w}_o}, \beta)\leq OT(\alpha_{\w^{\star}_o}, \beta)$.
    Based on Lemma \ref{lemma: 1},
    when $n_o\rightarrow \infty$,
    we could obtain:
    \begin{align*}
        \alpha_{\w^{\star}_o} \rightarrow \beta
        \Longrightarrow & ~~~OT(\alpha_{\w^{\star}_o}, \beta)\rightarrow 0 \\
        \Longrightarrow & ~~~OT(\alpha_{\hat{\w}_o}, \beta)\rightarrow 0,
    \end{align*}
    which means that with the estimated weights $\hat{\w}_o$, 
    we have
    $OT(P(\S, \X, A|G=o), P(\S, \X, A|G=e))\rightarrow 0$.
    Therefore,
    we also have $OT^{con}_{\x, a}\rightarrow 0$,
    since it is the lower bound of $OT(P(\S, \X, A|G=o), P(\S, \X, A|G=e))$.
\end{proof}

\subsection{Proof of Theorem \ref{theorem: ot2}}
\label{proof: ot2}
\begin{theorem}
    Let $\gamma_i$ be the optimal transport probability matrix of the $i$-th batch OT problem of $m\mbox{-}OT(P(\S, \X, A|G=o), P(\S, \X, A|G=e))$,
    i.e.,
    $OT(P_{\mathcal{B}_i}(\S, \X, A|G=o), P(\S, \X, A|G=e))$.
    We extend $ \gamma_i$ to a $n_o\times n_e$ matrix $\Gamma_{i}$ that pads zero entries to the row whose index does not belong to $\mathcal{B}_{i}$,
    then we have:
    \begin{align}
        \label{eq: proof3}
        \frac{1}{k}\sum_{i=1}^k\Gamma_i\in \Pi(P(\S, \X, A|G=o), P(\S, \X, A|G=e)),
    \end{align}
    and
    \begin{align}
    \label{eq: proof4}
        OT(P(\S, \X, A|G=o), P(\S, \X, A|G=e))
        \leq m\mbox{-}OT(P(\S, \X, A|G=o), P(\S, \X, A|G=e)).
    \end{align}
\end{theorem}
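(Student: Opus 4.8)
The plan is to prove the two claims in sequence: first that the zero-padded, batch-averaged plan $\Gamma := \frac{1}{k}\sum_{i=1}^k \Gamma_i$ is a feasible coupling, and then to exploit this feasibility together with the definition of $OT$ as a minimum to obtain the inequality. Throughout, I write $\alpha$ for the empirical observational distribution $P(\S,\X,A|G=o)$ with uniform mass $1/n_o$ on its $n_o$ atoms, $\beta$ for $P(\S,\X,A|G=e)$ with uniform mass $1/n_e$ on its $n_e$ atoms, and recall that the batches $\{\mathcal{B}_i\}_{i=1}^k$ partition $\{1,\dots,n_o\}$ with $|\mathcal{B}_i|=b$ and $kb=n_o$, so each atom of $\alpha$ lies in exactly one batch. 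Each $\gamma_i$ is an optimal coupling between $\alpha_{\mathcal{B}_i}$ (uniform mass $1/b$ on its $b$ atoms) and $\beta$, hence $\gamma_i\mathbf{1}_{n_e}=\frac{1}{b}\mathbf{1}_b$ and $\gamma_i^\top \mathbf{1}_b = \frac{1}{n_e}\mathbf{1}_{n_e}$.

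For the feasibility claim (Eq. (\ref{eq: proof3})), I would verify non-negativity and the two marginal constraints for $\Gamma$. Non-negativity and the entrywise bound $\Gamma_{ij}\in[0,1]$ are immediate, since $\Gamma$ is a convex combination (coefficients $1/k$) of the non-negative matrices $\Gamma_i$, whose entries inherit the $[0,1]$ bound of $\gamma_i$. For the row (observational) marginal I use the partition property: a fixed row index $j$ receives a nonzero contribution only from the single $\Gamma_i$ with $j\in\mathcal{B}_i$, whose row sum equals $\frac{1}{k}\cdot\frac{1}{b}=\frac{1}{kb}=\frac{1}{n_o}$, matching the uniform marginal of $\alpha$. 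For the column (experimental) marginal, each $\gamma_i$ already has column sums $\frac{1}{n_e}$, so averaging $k$ of them gives $\frac{1}{k}\sum_{i=1}^k \frac{1}{n_e}=\frac{1}{n_e}$, matching $\beta$. Hence $\Gamma\in\Pi(\alpha,\beta)$.

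Given feasibility, the inequality (Eq. (\ref{eq: proof4})) follows by plugging $\Gamma$ into the minimization defining $OT$: $OT(\alpha,\beta)=\min_{\Gamma'\in\Pi(\alpha,\beta)}\langle\Gamma',C\rangle \le \langle \Gamma, C\rangle = \frac{1}{k}\sum_{i=1}^k \langle \Gamma_i, C\rangle$ by linearity of the Frobenius inner product. It remains to identify $\langle\Gamma_i,C\rangle$ with the $i$-th batch cost: since $\Gamma_i$ zero-pads $\gamma_i$ on the rows outside $\mathcal{B}_i$, those rows contribute nothing, so $\langle\Gamma_i,C\rangle=\langle\gamma_i,C_{\mathcal{B}_i}\rangle=OT(\alpha_{\mathcal{B}_i},\beta)$, where $C_{\mathcal{B}_i}$ is the restriction of $C$ to the rows in $\mathcal{B}_i$ and the last equality uses optimality of $\gamma_i$. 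Summing gives $OT(\alpha,\beta)\le \frac{1}{k}\sum_{i=1}^k OT(\alpha_{\mathcal{B}_i},\beta)=m\mbox{-}OT(\alpha,\beta)$.

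The main obstacle is the mass bookkeeping in the feasibility step rather than any deep inequality: one must check that the two normalizations conspire correctly, namely that each batch plan carries the full $\beta$ marginal (so column sums are preserved under averaging) while the observational marginal is recovered only because the batches partition the atoms and $kb=n_o$. If the batches did not partition the index set, or if $b\nmid n_o$, the row-marginal identity $\frac{1}{kb}=\frac{1}{n_o}$ would fail and $\Gamma$ would not lie in $\Pi(\alpha,\beta)$; I would therefore flag the partition-and-divisibility hypothesis as the crux that makes the averaged plan exactly feasible, after which the inequality is a one-line consequence of optimality being a minimum over a set that contains $\Gamma$.
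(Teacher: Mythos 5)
Your proposal is correct and follows essentially the same route as the paper's proof: verify that the zero-padded average $\frac{1}{k}\sum_i\Gamma_i$ has the right row marginal (via the partition property and $kb=n_o$, which the paper encodes with the indicator-like vectors $\eta_i$) and the right column marginal, then invoke the minimality of $OT$ over the coupling set together with linearity of the Frobenius inner product. Your explicit flagging of the partition-and-divisibility hypothesis as the crux is a fair reading of the same bookkeeping the paper performs.
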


\begin{proof}
    Assuming the probability measure of $P(\S, \X, A|G=o)$ and $P(\S, \X, A|G=e)$ are $\alpha, \beta$, 
    respectively.
    Then according to the definition, 
    the proof of Eq. (\ref{eq: proof3}) is equivalent to prove:
    \begin{align*}
    \left(\frac{1}{k}\sum _{i=1}^{k} \Gamma_{i}\right)\mathbf{1}_{n_e} =\alpha, ~~~
    \left(\frac{1}{k}\sum _{i=1}^{k} \Gamma _{i}\right)^{T}\mathbf{1}_{n_o}  =\beta.
    \end{align*}
    Note that $\gamma _{i} \in \Pi(P_{\mathcal{B}_i}(\S, \X, A|G=o), P(\S, \X, A|G=e))$ satisfies:
    \begin{align*}
        \gamma_i\mathbf{1}_{n_e}=\alpha_{\mathcal{B}_i}, ~~~
        \gamma_i^T\mathbf{1}_{b}=\beta.
    \end{align*}
    Combining it with the definition of $\Gamma_i$, we have:
    \begin{align}
        \Gamma_i\mathbf{1}_{n_e}&=\bar{\alpha}_{\mathcal{B}_i}=\eta_i \odot \alpha, \label{eq: proof5}\\
        \Gamma_i^T\mathbf{1}_{n_o}&=\beta \label{eq: proof6},
    \end{align}
    where $\bar{\alpha}_{\mathcal{B}_i}\in \mathbb{R}^{n_o}$ is the extension of $\alpha_{\mathcal{B}_i}$ by padding zero entries to the dimension whose index does not belong to $\mathcal{B}_i$,
    $\odot $ corresponds to entry-wise product and $\eta_i$ is a $n_o$ dimensional vector with element satisfying that:
    \begin{align}
        \label{eq: proof7}
        \eta_i^j =\begin{cases}
        \frac{n_o}{b} =k, & if\ j\in \mathcal{B}_{i} ,\\
        0, & otherwise.
        \end{cases}
    \end{align}
    According Eq. (\ref{eq: proof5}) and (\ref{eq: proof6}), 
    we have:
    \begin{align}
        (\frac{1}{k}\sum _{i=1}^{k}\Gamma_i)\mathbf{1}_{n_e}
        &=\frac{1}{k}\sum _{i=1}^{k}(\Gamma_i\mathbf{1}_{n_e})
        =(\frac{1}{k}\sum _{i=1}^{k}\eta_i) \odot \alpha, \label{eq: proof8}\\
        (\frac{1}{k}\sum _{i=1}^{k}\Gamma_i)^T\mathbf{1}_{n_o}
        &=\frac{1}{k}\sum _{i=1}^{k}(\Gamma_i^T\mathbf{1}_{n_o})
        =\frac{1}{k}\sum _{i=1}^{k}\beta
        =\beta, \nonumber
    \end{align}
    Combining Eq. (\ref{eq: proof7}) with conditions $\mathcal{B}_{i} \cap \mathcal{B}_{j} =\emptyset $ and $\cup _{i=1}^{k}\mathcal{B}_{i} =\{1,2,...,n_o\}$,
    we obtain that:
    \begin{align*}
        (\frac{1}{k}\sum _{i=1}^{k}\eta_i)^j=k\sum _{i=1}^{k}\mathbb{I}(j\in \mathcal{B}_i)=k,
    \end{align*}
    which means that the term $\frac{1}{k}\sum _{i=1}^{k}\eta_i$ in Eq. (\ref{eq: proof8}) is $\frac{1}{k}\sum _{i=1}^{k}\eta_i=\mathbf{1}_{n_o}$.
    As a result,
    we have already proved Eq. (\ref{eq: proof3}) because the following definition holds:
        \begin{align*}
    \left(\frac{1}{k}\sum _{i=1}^{k} \Gamma_{i}\right)\mathbf{1}_{n_e} =\alpha, ~~~
    \left(\frac{1}{k}\sum _{i=1}^{k} \Gamma _{i}\right)^{T}\mathbf{1}_{n_o}  =\beta.
    \end{align*}
    By denoting the cost matrix between $P(\S, \X, A|G=o)$ and $P(\S, \X, A|G=e)$ as $C$,
    the above definition also leads to the following inequality:
    \begin{align}
        &\quad\;\;OT(P(\S, \X, A|G=o), P(\S, \X, A|G=e)) \nonumber\\
        &\leq \langle \frac{1}{k}\sum _{i=1}^{k} \Gamma_{i}, C\rangle
        = \frac{1}{k}\sum _{i=1}^{k} \langle  \Gamma_{i}, C\rangle \nonumber\\
        &= \frac{1}{k}\sum _{i=1}^{k} OT(P_{\mathcal{B}_i}(\S, \X, A|G=o), P(\S, \X, A|G=e)) \label{eq: proof9} \\
        &=  m\mbox{-}OT(P(\S, \X, A|G=o), P(\S, \X, A|G=e))\nonumber,
    \end{align}
    where the Eq. (\ref{eq: proof9}) comes from the definition of $~\Gamma_i$.
    So far, we have finished the proof of Eq. (\ref{eq: proof4}).
\end{proof}

\subsection{Proof of Theorem \ref{theorem: bound}}
\label{proof: bound}
\begin{theorem}
    Assuming a family $\cM$ of function $m: \cZ\times \cA\rightarrow \mathbb{R}$, 
    and there exists a constant $B_{\phi}>0$ such that $\frac{1}{B_{\Phi}}\cdot \ell_{\phi, g, h}(\x, a)\in \cM$,
    then we have:
    \begin{align*}
        \cE_{cf} \leq \cE_f^{\w_o} + IPM_\cM(P(\Z)P(A),~ \w_o\cdot P(\Z,A)),
    \end{align*}
    where $IPM_{\cM}(p,q)=\sup_{m\in \cM}\left|\int_\mathcal{X}g(x)(p(x)-q(x))dx\right|$ is integral probability metric for a chosen $\cM$, 
    $P(\Z)$ and $P(\Z, A)$ are distributions induced by the map $\phi$ from $P(\X)$ and $ P(\X, A)$.
\end{theorem}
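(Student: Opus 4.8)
The plan is to follow the representation-balancing argument of \cite{shalit2017estimating}, adapted here to the reweighted factual distribution. First I would expand both error terms as integrals against measures on $\cX \times \cA$: by definition $\cE_{cf} = \int_{\cX}\int_{\cA} \ell_{\phi, g, h}(\x, a)\, P(\x)P(a)\, d\x\, da$ (the product of the two marginals, since $\x$ and $a$ are drawn independently), while $\cE_f^{\w_o} = \int_{\cX}\int_{\cA} \ell_{\phi, g, h}(\x, a)\, \w_o P(\x, a)\, d\x\, da$. Subtracting, the gap $\cE_{cf} - \cE_f^{\w_o}$ is the integral of the loss against the signed measure $P(\X)P(A) - \w_o P(\X, A)$.

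Next I would transport the whole computation onto the representation space $\cZ$. The key observation is that, although $\ell_{\phi, g, h}(\x, a)$ depends on $\x$ both through the predictions $g(\phi(\x), a)$ and $h(\phi(\x), a, \cdot)$ and through the conditional outcome laws $P(\s(a)\mid \x)$ and $P(y(a), \s(a)\mid \x)$, the invertibility of $\phi$ with inverse $\psi$ lets me write $\ell_{\phi, g, h}(\x, a) = \ell_{\phi, g, h}(\psi(\z), a) =: \tilde\ell(\z, a)$ as a bona fide function of $(\z, a) \in \cZ \times \cA$. Pushing the measures forward through $\phi$ (which acts only on the $\x$-coordinate) then yields $\cE_{cf} - \cE_f^{\w_o} = \int_{\cZ}\int_{\cA} \tilde\ell(\z, a)\,[P(\z)P(a) - \w_o P(\z, a)]\, d\z\, da$, where $P(\Z)$ and $P(\Z, A)$ are exactly the induced distributions named in the statement.

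Finally I would invoke the IPM. By hypothesis $\tilde\ell / B_\phi \in \cM$, so after taking absolute values and factoring out $B_\phi$,
$$\cE_{cf} - \cE_f^{\w_o} \le \left|\int_{\cZ}\int_{\cA} \tilde\ell(\z, a)\,[P(\z)P(a) - \w_o P(\z, a)]\,d\z\,da\right| \le B_\phi \sup_{m \in \cM}\left|\int m\,[P(\Z)P(A) - \w_o P(\Z, A)]\right|,$$
which is precisely $B_\phi \cdot IPM_\cM(P(\Z)P(A),\, \w_o P(\Z, A))$; rearranging gives the claimed bound, with the constant $B_\phi$ absorbed consistently with the normalization chosen in the statement.

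The step I expect to be the main obstacle is this reduction to the representation space: making precise that the loss can legitimately be regarded as an element of the scaled family $\cM$ (a function of $\z$ and $a$ alone) relies crucially on $\phi$ being invertible, so that no information about $\x$ is lost when conditioning the outcome distributions. I would need to state the change of variables carefully and track that the weight $\w_o$ acts on the covariate side and therefore carries through the pushforward unchanged, so that the second IPM argument is genuinely $\w_o P(\Z, A)$ rather than some reweighted pushforward that no longer matches the stated form.
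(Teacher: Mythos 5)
Your proposal matches the paper's proof essentially step for step: the same decomposition of $\cE_{cf}-\cE_f^{\w_o}$ as an integral against the signed measure $P(\X)P(A)-\w_o P(\X,A)$, the same change of variables through the invertible representation $\phi$ (the paper invokes the Jacobian of $\psi$), and the same final passage to the supremum defining the IPM. If anything you are slightly more careful than the paper about where the constant $B_\phi$ lands after factoring the loss out of the family $\cM$; the paper silently drops it at the supremum step.
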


\begin{proof}
The inequality is equivalent to
    \begin{align}
&\;\;\;\;\;\;\cE_{cf} -\cE_{f}^{\w_o} \nonumber\\
 &= \int _{\mathcal{X}}\int _{\mathcal{A}} P( \x) P( a)\ell_{\phi, g, h}(\x, a) \ d\x da
 - \int _{\mathcal{X}}\int _{\mathcal{A}} P^{\w_o}( \x,a)\ell_{\phi, g, h}(\x, a) \ d\x da \nonumber\\
 & =\int _{\mathcal{X}}\int _{\mathcal{A}}\left( P( \x) P( a) -P^{\w_o}( \x,a)\right)\ell_{\phi, g, h}(\x, a) \ d\x da \nonumber\\
 & =  \int _{\mathcal{Z}}\int _{\mathcal{A}}\left( P( \z) P( a) -P^{\w_o}( \z,a)\right)\ell_{\phi, g, h}(\psi ( \z), a) \ d\z da \label{eq: proof1}\\
 & \leqslant \sup _{m\in \mathcal{M}} \ \left| \int _{\mathcal{Z}}\int _{\mathcal{A}}\left( P( \z) P( a) -P^{\w_o}( \z,a)\right) m( \z,a) \ d\z da\right| \label{eq: proof2}\\
 & =IPM_{\mathcal{M}}\left( P( \Z) P( A) ,\ P^{\w_o}( \Z,A)\right)\nonumber\\
 & =IPM_{\mathcal{M}}( P( \Z) P( A) ,\ \w_o\cdot P( \Z,A)) \nonumber,
\end{align}

where Eq. (\ref{eq: proof1}) can be obtained by the standard change of variables formula, 
using the determinant of the Jacobian of $\psi(\z)$, 
Eq. (\ref{eq: proof2}) is according to the definition of IPM with the assumption that $\frac{1}{B_{\Phi}}\cdot \ell_{\phi, g, h}(\x, a)\in \cM$.
\end{proof}

\subsection{Derivation about Projected Mirror Descent}
\label{app: optim}
In this section,
following \cite{yan2024exploiting},
we develop a projected mirror descent \cite{nemirovskij1983problem,raskutti2015information} based on the Kullback-Leibler (KL) divergence to solve the following problem,
which is non-trivial to address because of the equality constrints:
\begin{align*}
    &\min_{\w_{o, \mathcal{B}}} \min_{\gamma \in \Pi(\alpha_{\mathcal{B}}, \beta)}
    \langle \gamma, \C \rangle
    + \lambda_e \Omega(\gamma),\\
    s.t.~
    &\Pi(\alpha_{\mathcal{B}}, \beta)=\{\gamma\in \bR^{b\times n_e}|\gamma \mathbf{1}_{n_e}=\w_{o, \mathcal{B}}, \gamma^T \mathbf{1}_b=\bm{\mu}, \gamma_{ij}\in \left[0, 1\right]\},
\end{align*}
Firstly,
we rewrite the constrain about $\gamma$ as
\begin{align}
\label{eq: proof12}
    \Pi^{'}(\alpha_{\mathcal{B}}, \beta)=\{\gamma\in \bR^{b\times n_e}|\gamma^T \mathbf{1}_b=\bm{\mu}, \gamma_{ij}\in \left[0, 1\right]\},
\end{align}
which does not consider the constraint $\gamma \mathbf{1}_{n_e}=\w_{o,\mathcal{B}}$
since $\w_{o,\mathcal{B}}$ are also parameters to be optimized.
Based on this, 
the OT problem with a negative entropy regularization is given as follows:
\begin{align}
    \label{eq: proof10}
    \min_{\gamma}
    \langle \gamma, \C \rangle
    + \lambda_e \Omega(\gamma),~~~
    s.t.~
    \gamma\in \Pi^{'}(\alpha_{\mathcal{B}}, \beta).
\end{align}
For simplicity, 
we define the objective function in Problem (\ref{eq: proof10}) as
\begin{align*}
f(\gamma) = \langle \gamma, \C \rangle + \lambda_e \Omega(\gamma),
\end{align*}
and the $(i,j)$-th element of the gradient $\nabla f(\gamma)$ is denoted by $\nabla_{ij}$,
\begin{align*}
\nabla_{ij} 
=  C_{ij} + \lambda_e \log w_{i}
=  C_{ij} + \lambda_e \log \gamma_{i\cdot},
\end{align*}
where $\gamma_{i\cdot}$ is the sum of $i$-th row of $\gamma$.
Then at each iteration,
we solve the following problem:
\begin{align}
    \label{eq: proof11}
    \gamma^{k} = \min_{\gamma \in \Pi^{'}(\alpha_{\mathcal{B}}, \beta)} 
    \eta \langle \nabla f(\gamma^{k-1}), \gamma \rangle + \mathcal{D}(\gamma || \gamma^{k-1}),
\end{align}
which firstly performs proximal gradient descent with the Bregman divergence and the stepsize $\eta$,
and then obtains a feasible solution in the set $\Pi^{'}(\alpha_{\mathcal{B}}, \beta)$ by projection.
Next, we present the details of these two operations.

\textbf{Proximal Gradient Descent} 
Let $\upsilon^{k}$ be the solution to Problem (\ref{eq: proof11}) without considering the constraint $\gamma \in \Pi^{'}(\alpha_{\mathcal{B}}, \beta)$,
i.e.,
\begin{align*}
\upsilon^{k} = \min_{\gamma} 
 ~~  \eta \langle \nabla f(\gamma^{k-1}), \gamma \rangle + \mathcal{D}(\gamma || \gamma^{k-1}).
\end{align*}
We adopt the KL divergence between two distributions $\gamma$ and $\gamma^{k-1}$
as the Bregman divergence $\mathcal{D}(\gamma || \gamma^{k})$,
then the closed-form solution to the above problem is given as:
\begin{align}
\label{eq: proof17}
\upsilon^{k} = \gamma^{k-1} \odot \exp( - \eta \nabla f(\gamma^{k-1})).
\end{align}

\textbf{Projection Operation}
To make sure $\gamma^{k}$ satisfies the constraints in Eq. (\ref{eq: proof12}),
we update $\gamma^{k}$ by finding $\gamma \in \Pi^{'}(\alpha_{\mathcal{B}}, \beta)$ which is most close to $\upsilon^{k}$
under the KL metric.
This is achieved by solving the following projection problem (ignore constraint $\gamma_{ij}\in \left[0, 1\right]$ for now):
\begin{align}
\label{eq: proof16}
&\min_{\gamma}  \quad 
\mathcal{D} (\gamma || \upsilon^k)
= \sum_{i=1}^{b} \sum_{j=1}^{n_e} \gamma_{ij} \log(\frac{\gamma_{ij}}{\upsilon^k_{ij}}) - \gamma_{ij} + \upsilon^k_{ij} ,\nonumber\\
& s.t. ~~\gamma^{\top} \mathbf{1}_{b} =\bm{\mu}.
\end{align}
By introducing the Lagrangian multipliers $\bm{\lambda} = [\lambda_1, \ldots, \lambda_{n_t}]^{\top}$ for the equality constraint $\gamma^{\top} \mathbf{1}_{b} =\bm{\mu}$,
we obtain the Lagrangian $\mathcal{L}(\gamma, \bm{\lambda})$ as follows:
\begin{align*}
\mathcal{L}(\gamma, \bm{\lambda}) = 
\sum_{i=1}^{b} \sum_{j=1}^{n_e} \gamma_{ij} \log(\frac{\gamma_{ij}}{\upsilon^k_{ij}}) - \gamma_{ij} + \upsilon^k_{ij}
+ \bm{\lambda}^{\top} (\gamma^{\top} \mathbf{1}_{b} -\bm{\mu}).
\end{align*}
By taking the partial derivative of $\mathcal{L}(\gamma, \bm{\lambda})$ with respect to $\gamma_{ij}$ to zero,
we obtain:
\begin{align}
 \log \gamma_{ij}& = \log \upsilon^k_{ij} - \lambda_j \nonumber\\
 \Rightarrow 
 \gamma_{ij} &= \upsilon^k_{ij} \exp (-\lambda_j) \label{eq: proof13}.
\end{align}
According to the equality constraint $\gamma^{\top} \mathbf{1}_{b} =\bm{\mu}$,
we have $\sum_{i=1}^{b} \gamma_{ij} = 1/n_e$.
Combining it with the above result,
we further obtain:
\begin{align}
\sum_{i=1}^{b} \gamma_{ij} 
= 
\sum_{i=1}^{b} \upsilon^k_{ij} \exp (-\lambda_j)
& = 
\exp (-\lambda_j)
\sum_{i=1}^{b} \upsilon^k_{ij}
= \frac{1}{n_e}
\nonumber \\
\Rightarrow
\exp (-\lambda_j)
& = 
\frac{1}{n_e \sum_{i=1}^{b} \upsilon^k_{ij}}. \label{eq: proof14}
\end{align}
Combining Eq. (\ref{eq: proof13}) and (\ref{eq: proof14}),
we obtain the closed-form solution:
\begin{align}
\label{eq: proof15}
\gamma_{ij}  = 
\upsilon^k_{ij}
\Big /
\Big( 
n_e \sum_{i=1}^{b} \upsilon^k_{ij} 
\Big).
\end{align}
For an initial value $\gamma_{ij}^{0} \geq 0$,
given $\upsilon^k_{ij} > 0$ which is guaranteed by the update rule in Eq. (\ref{eq: proof12}),
it is obvious that the solution obtained by Eq. (\ref{eq: proof15})
satisfies the box constraint
$\gamma_{ij} \in [0,1]$.
Therefore, 
Problem (\ref{eq: proof16}) does not consider this constraint explicitly.

For now,
we have derived how to solve the Problem Eq. (\ref{eq: proof10}).
That is,
we repeat steps Eq. (\ref{eq: proof17}) and (\ref{eq: proof15}) until converging.

\section{Additonal Related Work}
\label{sec: related work}

\subsection{Dose-Response Curve Estimation}
Dose-response curve estimation is broadly studied through statistical methods \cite{vegetabile2021nonparametric, kennedy2017non, zhu2015boosting},
and most of them are based on the Generalized Propensity Score (GPS, \cite{imbens2000role}) or the Entropy Balancing for Continuous Treatments (EBCT, \cite{tubbicke2022entropy}).
Recently, 
deep learning-based methods also attract the attention of the research community.
DRNet \cite{schwab2020learning} divides the continuous treatment into several intervals, and trains one separate head for each interval.
In order to produce a continuous curve,
VCNet \cite{nie2021vcnet} adapts a varying coefficient model into neural networks to handle the continuous treatment.
SCIGAN \cite{bica2020estimating} generates counterfactual outcomes for continuous treatments based on the Generative Adversarial Network (GAN) framework.
ADMIT \cite{wang2022generalization} derives a counterfactual bound of estimating the average dose-response curve with a discretized approximation of the IPM distance,
and ACFR \cite{kazemi2024adversarially} minimizes the KL-divergence using an adversarial game to extract balanced representations for continuous treatment.
Different from the existing works above,
we aim to estimate the HDRC under the long-term estimation scenario with unobserved confounders in this paper, which is more complex.

\subsection{Optimal Transport} 
Optimal transport seeks to find an optimal plan for moving mass from one distribution to another with the minimal transport cost
\cite{1781memoireMonge,1958translocationKantorovitch,2008optimalVillani}.
Recently, optimal transport has shown powerful ability in different kinds of applications
\cite{peyre2019computational,2019oversamplingYan,zhao2018label}.
For computer vision, the Earth Mover's Distance which is calculated based on the solution to the optimal transport problem, is used as a metric for image retrieval\cite{2000earthRubner}.
For transfer learning,
data from one distribution is transported to another distribution based on the optimal transport plan for label information transfer \cite{2014domainCourty,2017optimalCourty,2017jointCourty}.
For generative modeling, the Wasserstein distance derived by optimal transport
is minimized to train deep generative models
\cite{2018wassersteinTolstikhin,2017wassersteinArjovsky}.
For structured data,
Wasserstein \cite{maretic2022fgot},
Gromov-Wasserstein
\cite{xu2020gromov}
and Fused Gromov-Wasserstein \cite{titouan2019optimal}
are applied for graph data analysis.

Recently, 
there are also some works trying to introduce optimal transport into causal inference.
\cite{gunsilius2021matching} employs unbalanced optimal transport
for matching.
\cite{li2021causal} proposes to infer counterfactual outcomes via transporting the factual distribution to the counterfactual one.
\cite{dunipace2021optimal, yanreducing, yan2024exploiting} apply
optimal transport to learn sample weights to achieve distribution balancing.
\cite{wang2024optimal} solves the problems of mini-batch sampling and unobserved confounders under CFR \cite{shalit2017estimating} framework through optimal transport.
To the best of our knowledge,
this is the first work exploiting optimal transport to solve the unobserved confounder problem via data combination under the long-term estimation scenario.

\section{Model Details}
\label{implement details}

\subsection{Implementation Details}
Our code is implemented using the PyTorch 1.8.1 framework.
All MLP in our model is 2 fully connected layers with 50 hidden units and Elu activation function,
and we build our model as follows:

For the \textit{\textbf{balanced representation module}},
we use an MLP to build the representation function $\phi$,
and use the Wasserstein distance as the IPM term.
The code of the Wasserstein distance can be found in \url{https://github.com/rguo12/network-deconfounder-wsdm20/blob/master/utils.py}.

For the \textit{\textbf{OT weight module}},
at each iteration,
1) We first construct the cost matrix as $C_{ij} = \beta_r d(\bar{\r}_i, \bar{\r}_j) + \beta_x d(\x_i, \x_j) + \beta_a d(a_i, a_j)$,
where the hyperparameters $\beta_r=10, \beta_x=0.1, \beta_a=0.1$,
$d(\cdot, \cdot)$ is the euclidean distance,
and the distance of short-term outcomes is calculated from the mean embedding of GRU as $\bar{\r}=\frac{1}{t_0}\sum_{t=1}^{t_0} \r_t$.
As a result,
we need pre-train the model firstly to ensure that the GRU's embedding is meaningful.
2) We then obtain the optimal transport probability matrix $\gamma^{*}$ by repeating steps Eq. (\ref{eq: proof17}) and (\ref{eq: proof15}) until converging.
3) Finally, the weights of observational samples could be obtained as $w_{i} = \sum_{j=1}^{n_e} \gamma^{*}_{ij}$,
where $w_i$ is the sum of the $i$-th row of $\gamma^{*}$.

For the \textit{\textbf{long-term estimation module}},
we use a shared MLP to build the function $g(\cdot)$ predicting the short-term outcomes,
use another MLP to build the function $h(\cdot)$ predicting the long-term outcome,
and use a GRU model to build the RNN $q(\cdot)$ extracting the short-term representations.
As for the attention mechanism $f(\cdot)$,
we implement it as follows:
\begin{align}
    \tilde{\R}_t=\tanh{(\W\R_t+\b)},~~~
    \alpha_t=\frac{\exp{\tilde{\R}_t^\top \V}}{\sum_{t=1}^{t_0} \exp{\tilde{\R}_t^\top \V}},~~~
    \R_T=\sum_{t=1}^{t_0}\alpha_t \R_t,\nonumber
\end{align}
where $\W, \b, \V$ are the parameters of the attention mechanism $f(\cdot)$.

Further,
following \cite{nie2021vcnet},
we model the parameters $\theta(a)$ of the above four network structures $q(\cdot), g(\cdot), f(\cdot), h(\cdot)$ as the varying coefficient structure,
i.e.,
$\theta(a)=\sum_{l=1}^L\alpha_l\phi_l(a)$,
where $\phi_l$ is the basis function and $\alpha_l$ is the coefficient.
Specifically,
we use truncated polynomial basis with degree 2 and two knots at $\{1/3,2/3\}$ (altogether 5 basis) to build the varying coefficient structure.
The code of the varying coefficient structure is written based on VCNet (\url{https://github.com/lushleaf/varying-coefficient-net-with-functional-tr/blob/main/models/dynamic_net.py}).

\begin{algorithm}[!t]
\caption{Pseudocode of the proposed LEARN}
\label{alg: code}
\begin{algorithmic}[1]
\Require observational data $O$, experimental data $E$, initial model parameters $\theta=\{\theta_\phi, \theta_{q}(a), \theta_{g}(a), \theta_{f}(a), \theta_{h}(a)\}$, hyper-parameters $\{\lambda_o, \lambda_b, \lambda_e\}$, batch size $b$, pre-training epoch $n_{pre}$, training epoch $n$, OT epoch $n_{ot}$, learning rate $\alpha$

\State \textit{// Pre-training phase. Ensuring the short-term embeddings of RNN are meaningful, which will be used in constructing the cost matrix $\C$ in optimal transport at each training iteration}
\For{$i=1,2,...,n_{pre}$}
    \State Sample a mini-batch $\mathcal{B}$ from the observational data $O$
    \State Update $\theta^{i+1}\leftarrow \theta^i - \alpha \nabla_\theta \mathcal{L}_\theta$ by setting $\w_{\mathcal{B}}=\{\frac{1}{b}, \frac{1}{b}, ..., \frac{1}{b}\}$.
\EndFor

\State \textit{// Training phase. Removing the observed and unobserved confounding bias}
\For{$i=1,2,...,n$}
\State Sample a mini-batch $\mathcal{B}$ from the observational data $O$
\State \textit{// Learning weights $\w_{\mathcal{B}}$ for the batch observational units via data combination}
\State Compute the cost matrix $\C$ between $\mathcal{B}$ and $E$
\For{$k=1,2,...,n_{ot}$}
    \State Calculate $\upsilon^k$ according to Eq. (\ref{eq: proof17})
    \State Update $\gamma^k$ according to Eq. (\ref{eq: proof15})
\EndFor
\State $\hat{\w}_{\mathcal{B}}=\gamma^{n_{ot}} \mathbf{1}_{n_e}$

\State \textit{// Weighted regression with IPM term}
\State Update $\theta^{i+1}\leftarrow \theta^i - \alpha \nabla_\theta \mathcal{L}_\theta$ by setting $\w_{\mathcal{B}}=\hat{\w}_{\mathcal{B}}$. 
\EndFor

\State \textbf{return} Learned model parameters $\theta$

\end{algorithmic}
\end{algorithm}

\subsection{Parameters Setting} 
We set the hyper-parameters in our model as follows:
the strength of the Wasserstein distance $\lambda_b\in \{1, 10, 50, 100\}$,
the strength of negative entropy regularization $\lambda_e\in \{1e-4, 1e-2\}$,
the learning rate about the projected mirror descent as $1e-3$,
and the iterations about the projected mirror descent as $\{50, 100\}$,
the strength of short-term outcome loss as $\lambda_o\in\{0.25, 0.5, 0.75\}$.
Besides,
we pre-train the model for 100 epochs without weighting,
and then the number of training epochs is 400 with early stop.
We use Adam \cite{kingma2014adam} with learning rate of $1e-3$ and weight decay of $5e-4$.
All the experiments run on the Nvidia RTX A6000 GPU.

\subsection{Pseudocode} 
We provide the pseudocode of our model in Algorithm \ref{alg: code}.

\section{Experimental Details}
\label{exp details}
\subsection{Dataset Generation}
\label{sec: data gene}

\textbf{Synthetic Data Generation}
We simulate synthetic data as follows.
For each observational unit $i\in \{1, 2, ..., n_o\}$ with $G=o$,
we generate $p$ observed covariates and $q$ unobserved covariates from independent identical distributions,
i.e.,
$X^o\sim \cN(\mathbf{0.1}_p, \mathbf{I}_p)$ and $U\sim \cN(\mathbf{0.25}_q, \mathbf{I}_q)$,
where $\mathbf{I}$ denotes the identity matrix.
Similarly, 
we generate experimental units $i\in \{1, 2, ..., n_e\}$ with $G=e$ as $X^e\sim \cN(\mathbf{0.2}_p, \mathbf{I}_p), U\sim \cN(\mathbf{0.25}_q, \mathbf{I}_q)$.
Then we generate treatment and outcomes for two types of datasets following the causal graphs Fig. \ref{fig: graphs} with $W^x_j\sim \mathcal{U}(0, 0.5), W^u_j\sim \mathcal{U}(0.2, 0.5)$:

\begin{align*}
A=&\sigma(\sum_{j=1}^{p/3} W_j^x\sin{(X_j)}+\sum_{j=p/3}^{2p/3}W_j^x X_j^2+\beta_U\bar{U}\mathbb{I}(G=o)),\\
S_t=&\sum_{j=p/3}^{2p/3}W_j^x X_j^2 + (t+5)A(\sum_{j=2p/3}^{p}W_j^x X_j)
+ \beta_UtA(\sum_{j=1}^q W_j^u\cos{(U_j)})
\\&+ 0.5\Bar{S}_{1:t-1}+\mathcal{N}(0, 0.5), \quad t=0,1,...,7\\
Y=&0.25\sum_{j=1}^{p} W_j^x X_j^2 + \frac{A}{7}\sum_{t=1}^{7} e^{1/t} S_t + \mathcal{N}(0, 0.5).
\end{align*}

where $n_o=10000, n_e=500, p=15, q=5$.
Besides, 
we design 5 synthetic datasets with varying $\beta_U\in \{1, 1.5, 2, 2.5, 3\}$ that controls different strengths of unobserved confounding bias.

\textbf{Semi-synthetic Data Generation}
For the semi-synthetic datasets, we select the News \cite{schwab2020learning} and The Cancer Genome Atlas (TCGA) \cite{weinstein2013cancer}. The News comprises 5,000 randomly sampled articles from the NY Times corpus, including media consumer opinions, and was originally introduced as a benchmark for counterfactual inference with two treatment options. The TCGA includes gene expression data from 9,659 individuals across various cancer types, enabling the estimation of cancer recurrence risk under different treatments (e.g., medication, chemotherapy, and surgery) with dosage parameters.
In these semi-synthetic datasets, we reuse the original covariates.
We first randomly partition the dataset into observational ($G=o$) and experimental ($G=e$) subsets at a 9:1 ratio,
followed by segregation of original covariates into observed ($X$) and unobserved ($U$) ones at an 8:2 ratio.
Then we follow \cite{nie2021vcnet, wang2022generalization} to generate the assigned treatments and corresponding outcomes.
After generating a set of parameters $\boldsymbol{v}_{i, x}=\boldsymbol{u}_{i, x}/||\boldsymbol{u}_{i, x}||$ for $i=1,2,3$ (\textit{resp.}, $\boldsymbol{v}_{i, u}$), where $\boldsymbol{u}_{i, x}$ (\textit{resp.}, $\boldsymbol{u}_{i, u}$) is sampled from a normal distribution $ \mathcal{N}(\mathbf{0}, \mathbf{1})$, 
we have:

\begin{align*}
    A\sim &Beta(\gamma, \beta), 
    \textit{where} ~\gamma=2,
    \beta=\frac{\gamma-1}{d^{*}}
    +2-\gamma \;
    \textit{and}~ d^{*}=\left|\frac{\boldsymbol{v}_{3,x}^T X}{2\boldsymbol{v}_{2, x}^T X} + \frac{\boldsymbol{v}_{3, u}^T U}{2\boldsymbol{v}_{2, u}^T U}\mathbb{I}(G=o)\right|,\\
    S_t^{\prime}=&\max\left(-2, \min\left(2, \exp(\frac{\boldsymbol{v}_{2,x}^T X}{\boldsymbol{v}_{3,x}^T X} + \frac{\boldsymbol{v}_{2,u}^T U}{\boldsymbol{v}_{3,u}^T U} - 0.3)\right)\right) + \alpha t\left(\boldsymbol{v}_{1,x}^T X + \boldsymbol{v}_{1,u}^T U\right), \quad t=0,1,...,7,\\
    S_t=&8(A-0.5)^2 \sin{(\frac{\pi}{2}A)}S_t^{\prime} + 0.5\Bar{S}_{1:t-1} + \mathcal{N}(0, 0.5), \\
    Y=&\lambda (\boldsymbol{v}_{1, x}^T X + \boldsymbol{v}_{2, x}^T X) + \frac{A}{7}\sum_{t=1}^{7} e^{1/t} S_t + \mathcal{N}(0, 0.5).
\end{align*}

For News, we select $\alpha=1, \lambda=5$, and for TCGA, we select $\alpha=20, \lambda=50$.

\subsection{Hyperparameter Sensitivity Study (RQ5)} 
\label{sec: sensitivity}
\begin{figure*}[!t]
\centering
\hspace{0.5cm}
\begin{subfigure}[b]{0.23\textwidth}
    \includegraphics[width=\textwidth]{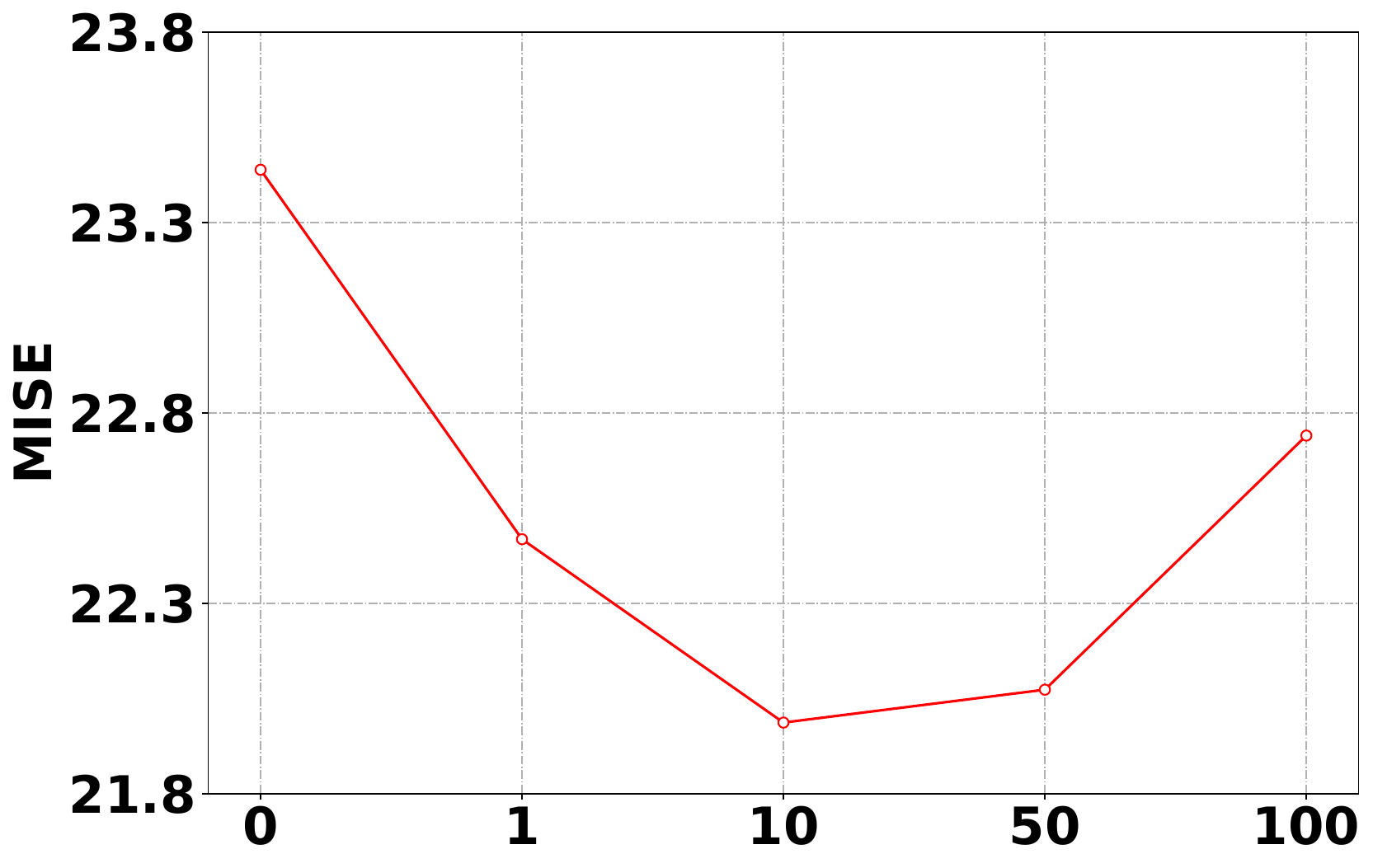}
    \caption{Impact of $\lambda_b$}
    \label{fig: lambda_b}
\end{subfigure}
\hspace{1.3cm}
\begin{subfigure}[b]{0.23\textwidth}
    \includegraphics[width=\textwidth]{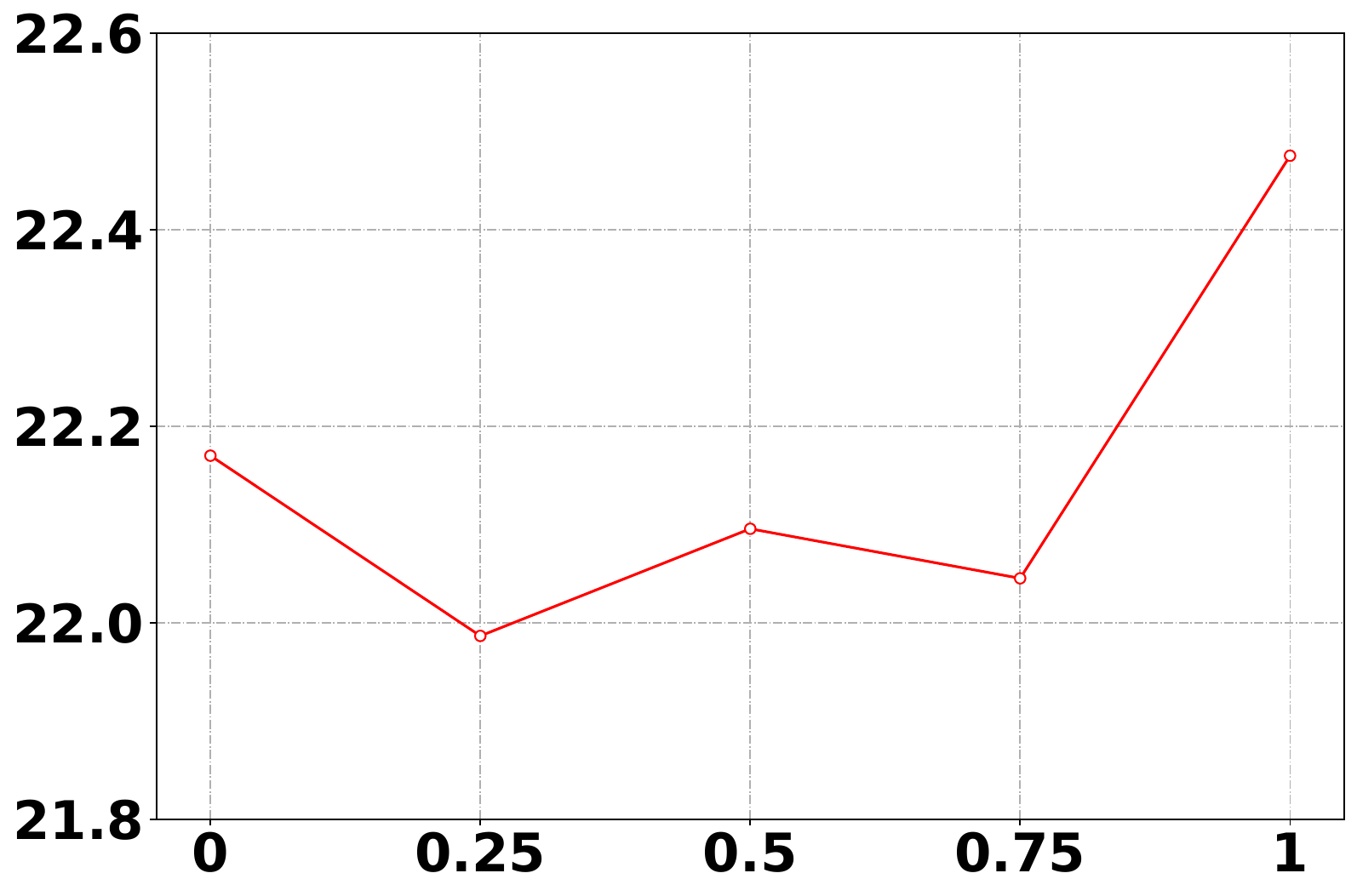}
    \caption{Impact of $\lambda_o$}
    \label{fig: lambda_o}
\end{subfigure}
\hspace{1.3cm}
\begin{subfigure}[b]{0.23\textwidth}
    \includegraphics[width=\textwidth]{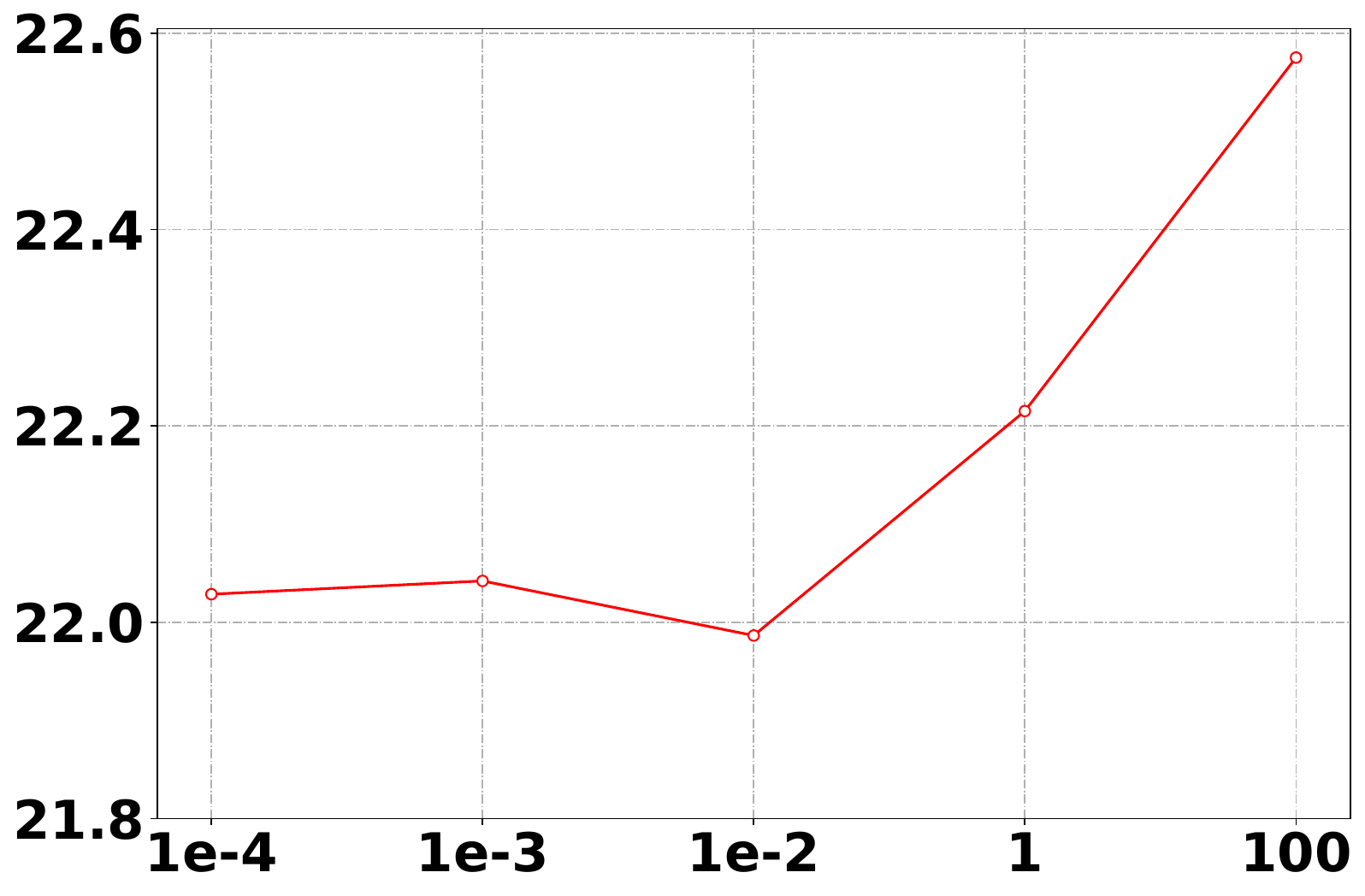}
    \caption{Impact of $\lambda_e$}
    \label{fig: lambda_e}
\end{subfigure}
\hspace{0.5cm}
\caption{Hyperparameter sensitivity. (a) Strength of IPM.
(b) Strength of short-term loss between observational and experimental datas.
(c) Strength of negative entropy regularization.}
\label{fig: sensitivity study}
\end{figure*}

We investigate the impact of three hyperparameters $\lambda_b, \lambda_o, \lambda_e$ in Fig. \ref{fig: sensitivity study}.
Firstly,
we observe that MISE increases when $\lambda_b$ becomes small or large, 
indicating that there exists a trade-off between balancing representation and accurate outcome estimation.
Secondly,
the results of varying $\lambda_o$ shows that considering the short-term outcomes of observational and experimental datas together can bring improvement.
Finally,
applying the negative entropy regularization in OT is beneficial, 
however, MISE increases when $\lambda_e\geq 1$, 
since large $\lambda_e$ will push the learned weights close to the uniform distribution, resulting in a failure of alignment.

\section{Limitations}
\label{sec: limitation}
While our framework allows for unobserved confounders in observational data, identification still requires additional assumptions. 
The most critical among them is the \emph{latent unconfoundedness} assumption, which posits that unobserved confounders  affect the long-term outcome only through short-term outcomes. 
This assumption is reasonable in our vocational training example, where short-term skill assessments can reasonably capture unobserved confounders like learning aptitude. 
However, it may not hold in all real-world scenarios.

Additionally, our approach relies on Optimal Transport (OT) to learn sample weights.
While OT offers several advantages, it also imposes limitations about the scalability of our method in high-dimensional or data-limited settings.
First, the sample complexity of our OT-based weighting method scales as $O(n^{-1/d})$ ($d$ is the sum of dimensions of covariates, short-term outcomes, and treatments), which may be a little slow in high-dimensional setting. 
Second,
Theorem \ref{theorem: ot1} shows that we could only completly remove the unobserved confounding bias when we have infinite observational data. 
Therefore, with limited observational data, our method may only substantially reduce, but not entirely eliminate the unobserved confounding bias.


\section{Broader Impact}
\label{sec: impact}
This paper studies the problem of estimating the long-term HDRC,
which utilizes the power of deep learning and optimal transport technologies to assist better long-term decision-making in many domains.
For example, 
in a ride-hailing platform, 
the platform needs to evaluate the effect of various incomes of different drivers on their retention after one year to keep the long-term balance of demand and supply.
In the medical field,
a healthcare institution might need to analyze the impact of a new medication on the five-year survival rates of cancer patients to determine whether to adopt it as a standard treatment protocol.
However,
this technology towards better estimating and understanding long-term HDRC can be used negatively, where someone wishing to cause harm can use the estimated outcomes to select the worst outcome.

\end{document}